\documentclass[accepted]{uai2024} 
                        

\usepackage[american]{babel}

\usepackage{natbib} 
    \bibliographystyle{plainnat}
    
\usepackage{mathtools} 
\usepackage{booktabs} 
\usepackage{tikz} 



\title{Decentralized Online Learning in General-Sum Stackelberg Games}

%
%
\author[1]{Yaolong Yu}
\author[2]{Haipeng Chen}
\affil[1]{%
    Department of Computer Science and Engineering\\
    The Chinese University of Hong Kong\\
    Hong Kong
}
\affil[2]{%
    Data Science\\
    William \& Mary\\
    Williamsburg, Virginia, USA
}

\usepackage{style}
\usepackage[toc,page]{appendix}
\usepackage[mathscr]{euscript}
\usepackage[ruled,vlined]{algorithm2e}
\usepackage{bbm}
\usepackage{todonotes}
\usepackage{natbib}
\usepackage{graphicx} 
\usepackage{subcaption} 
\usepackage{caption} 

\usepackage[utf8]{inputenc} 
\usepackage[T1]{fontenc}    
\usepackage{hyperref}       
\usepackage{xurl}
\usepackage{booktabs}       
\usepackage{amsfonts}       
\usepackage{nicefrac}       
\usepackage{microtype}      
\usepackage{xcolor}         
\usepackage{enumitem}       

\newcommand{\hmu}{\hat{\mu}}
\newcommand{\mA}{\mathcal{A}}
\newcommand{\mB}{\mathcal{B}}
\newcommand{\mO}{\mathcal{O}}

\newcommand{\mF}{\mathcal{F}}
\newcommand{\mK}{\mathcal{K}}
\newcommand{\tucb}{\text{ucb}}
\newcommand{\mQ}{\mathbf{Q}}
\newcommand{\tr}{\Tilde{r}}
\newcommand{\mG}{\mathcal{G}}
\newcommand{\mP}{\mathbb{P}}
\newcommand{\tU}{\text{U}}
\newcommand{\tx}{\Tilde{x}}



\newcommand{\tbr}{\mathcal{F}_{br}}
\newcommand{\twr}{\text{wr}}
\newcommand{\mFb}{\mathcal{F}_{br}}

\begin{document}
\maketitle

\begin{abstract}
We study an online learning problem in general-sum Stackelberg games, where players act in a decentralized and strategic manner. We study two settings depending on the type of information for the follower: (1) the \textit{limited information} setting where the follower only observes its own reward, and (2) the \textit{side information} setting where the follower has extra side information about the leader's reward. We show that for the follower, myopically best responding to the leader's action is the best strategy for the limited information setting, but not necessarily so for the side information setting -- the follower can manipulate the leader's reward signals with strategic actions, and hence induce the leader's strategy to converge to an equilibrium that is better off for itself. Based on these insights, we study decentralized online learning for both players in the two settings. Our main contribution is to derive last-iterate convergence and sample complexity results in both settings. Notably, we design a new manipulation strategy for the follower in the latter setting, and show that it has an intrinsic advantage against the best response strategy. Our theories are also supported by empirical results.
\end{abstract}

\section{Introduction}\label{sec:intro}

Many real-world problems such as optimal auction~\citep{myerson1981optimal,cole2014sample} and security games~\citep{tambe2011security} can be modeled as a hierarchical game, where the two levels of players have asymmetric roles and can be partitioned into a \textit{leader} who first takes an action and a \textit{follower} who responds to the leader's action. This class of games is called Stackelberg or leader-follower games~\citep{stackelberg1934marktform,sherali1983stackelberg}. General-sum Stackelberg games~\citep{roughgarden2010algorithmic} are a class of Stackelberg games where the sum of the leader's and follower's rewards is not necessarily zero. They have broad implications in many other real-world problems such as taxation policy making~\citep{zheng2020ai}, automated mechanism design~\citep{conitzer2002complexity}, reward shaping~\citep{leibo2017multi}, security games~\citep{gan2018stackelberg,blum2014learning}, anti-poaching~\citep{fang2015security}, and autonomous driving~\citep{shalev2016safe}. 

We focus on the setting of a pure strategy in repeated general-sum Stackelberg games follows that of~\citet{bai2021sample},
where players act in an \textit{online}, \textit{decentralized}, and \textit{strategic} manner. \textit{Online} means that the players learn on the go as opposed to learning in batches, where regret is usually used as the learning objective. \textit{Decentralized} means that there is no central controller, and each player acts independently. \textit{Strategic} means that the players are self-interested and aim at maximizing their own utility. Moreover, we take a \textit{learning} perspective, meaning that the players learn from (noisy) samples by playing the game repeatedly. A comparable framework has been investigated in numerous related studies~\citep{kao2022decentralized}, finding widespread applications in diverse real-world scenarios, such as addressing the optimal taxation problem in the AI Economist~\citep{zheng2020ai} and optimizing auction procedures~\citep{amin2013learning}.

There have been extensive studies on decentralized learning in multi-agent games~\citep{blum2007external,wu2022multi,jin2021v,meng2021decentralized, wei2021last,song2021can,mao2023provably,zhong2023can,ghosh2023provably}, mostly focusing on settings where all agents act \textit{simultaneously}, without a hierarchical structure. Multi-agent learning in Stackelberg games has been relatively less explored. For example, \citet{goktas2022zero,sun2023zero} study zero-sum games, where the sum of rewards of the two players is zero. \citet{kao2022decentralized, zhao2023online} study cooperative games, where the leader and the follower share the same reward. Though these studies make significant contributions to understanding learning in Stackelberg games, they make limiting assumptions about the reward structures of the game. The first part of this paper studies the learning problem in a more generalized setting of general-sum Stackelberg games. Due to the lack of knowledge of the reward structures, learning in general-sum Stackelberg games is much more challenging, and thus remains open.

Furthermore, in these studies~\citep{goktas2022zero,sun2023zero,kao2022decentralized,zhao2023online}, a hidden assumption is that the optimal strategy of the follower is to best respond to the leader's strategy in each round.  
Recent studies~\citep{gan2019manipulating,gan2019imitative,nguyen2019imitative,birmpas2020optimally,chen2022optimal,chen2023learning} show that under information asymmetry, a strategic follower can manipulate a \textit{commitment} leader by misreporting their payoffs, so as to induce an equilibrium different from Stackelberg equilibrium that is better-off for the follower. The second part of this paper extends this intuition to an online learning setting, where the leader learns the commitment strategy via no-regret learning algorithms (e.g., EXP3~\citep{auer2002nonstochastic} or UCB~\citep{auer2002finite}).

We use an example online Stackelberg game to further illustrate this intuition, the (expected) payoff matrix of which is shown in Table~\ref{tab:FM_example}.
By definition, $(a_{se},b_{se}) = (a_1,b_1)$ is its Stackleberg equilibrium. This is obtained by assuming that the follower will best respond to the leader's action. In online learning settings, when the leader uses a no-regret learning algorithm, and the (strategic) follower forms a response function as $\mathcal{F}=\{\mathcal{F}(a_1)=b_2,\mathcal{F}(a_2)=b_1\}$, then the leader will be tricked into believing that the expected payoffs of actions $a_1$ and $a_2$ are respectively $0.1$ and $0.2$. Hence, the leader will take action $a_2$ when using the no-regret learning algorithms, and the game will converge to $(a_2,b_1)$, where the follower has a higher payoff of $1$ compared to $0.1$ in the Stackleberg equilibrium.
\begin{table}[ht]
\vspace{-1mm}
\centering
\begin{tabular}{|c|c|c|}
\hline
Leader / Follower   & $b_1$ & $b_2$  \\
\hline
$a_1$  &  (0.3, 0.1)  &  (0.1, 0.05)    \\
\hline
$a_2$  &  (0.2, 1)  &   (0.3, 0.1)     \\
\hline         	
\end{tabular}
\caption{Payoff matrix of an example Stackelberg game. The row player is the leader. Each tuple denotes the payoffs of the leader (left) and the follower (right).}   
\label{tab:FM_example}
\vspace{-4mm}
\end{table}



The insights gained from the above example can be generalized. In the following, we introduce two different categories of general-sum Stackelberg games that are distinguished by whether the follower has access to the information about the reward structure of the leader: (i) \textbf{limited information:} the follower only observes the reward of its own, and (ii) \textbf{side information:} the follower has extra side information of the leader's reward in addition to itself. 

In the limited information setting, the follower is not able to manipulate the game without the leader's reward information. Therefore, best responding to the leader's action is indeed the best strategy. This constitutes a typical general-sum Stackelberg game. Our contribution in this setting is to prove the convergence of general-sum Stackelberg equilibrium when both players use (variants of) no-regret learning algorithms. Note that this is a further step after \citep{kao2022decentralized} who focus on cooperative Stackelberg games where the two players share the same reward. In addition, we prove last-iterate convergence~\citep{mertikopoulos2018cycles,daskalakis2018limit,lin2020gradient,wu2022multi} results, which are considered stronger than the average convergence results.

In the side information setting, we first consider the case when the follower is omniscient, i.e., it knows both players' exact true reward functions. Building on the intuition from the above example, we design FBM, a manipulation strategy for the follower and prove that it gains an intrinsic advantage compared to the best response strategy. Then, we study a more intricate case called noisy side information, where the follower needs to learn the leader's reward information from noisy bandit feedback in the online process. We design FMUCB, a variant of FBM that finds the follower’s best manipulation strategy in this case, and derive its sample complexity as well as last-iterate convergence. Our results complement existing works~\citep{birmpas2020optimally,chen2022optimal,chen2023learning} that focus on the learning perspective of only the follower against a \textit{commitment} leader in \textit{offline} settings.

To validate the theoretical results, we conduct synthetic experiments for the above settings. Empirical results show that: 1) in the limited information setting, (variants of) no-regret learning algorithms lead to convergence of Stackelberg equilibrium in general-sum Stackelberg games, 2) in the side information setting, our proposed follower manipulation strategy does introduce an intrinsic reward advantage compared to best responses, both in the cases of an omniscient follower and noisy side information.

\section{Related Work}
\noindent\textbf{Decentralized learning in simultaneous multi-agent games.}
This line of works has broad real-world applications, e.g., when multiple self-interested teams patrol over the same targets in security domains~\citep{jiang2013defender} or wildlife conservation~\citep{wwf2015parkistan}, or when different countries that independently plan their own actions in international waters against illegal fishing~\citep{klein2017can}. There have been extensive studies in this category \citep{blum2007external,wu2022multi,jin2021v,meng2021decentralized, wei2021last}. The seminal work of \citet{blum2007external} shows that decentralized no-regret learning in multi-agent general-sum games leads to a coarse correlated equilibrium (CCE). The result has been improved using more sophisticated methods like optimistic hedge~\citep{daskalakis2021near,chen2020hedging,anagnostides2022near}. \citet{jin2021v,liu2021sharp} study a similar question in Makov games that involves sequential decision and reinforcement learning.

\noindent\textbf{Learning in Stackelberg games.}
While our work also focuses on decentralized learning, we focus on learning in Stackelberg games where the players act sequentially. \citet{lauffer2022no} study a different setting than ours where the leader first commits to a \textit{randomized} strategy, and the follower observes the randomized strategy and best responds to it. Such a setting also appears in~\citet{balcan2015commitment}. In our setting, the leader first plays a \textit{deterministic} action, and the follower responds to the action after observing it, a setting that is closer to~\citet{bai2021sample,kao2022decentralized}. Many past works about learning in Stackelberg games, like security games~\citep{blum2014learning,peng2019learning,balcan2015commitment,letchford2009learning}, only focus on learning from the leader's perspective, assuming access to an oracle of the follower's best response. More recently, there have been rising interests for Stackelberg games involving learning of both players~\citep{goktas2022zero,sun2023zero,kao2022decentralized, zhao2023online}. They focus on sub-classes of games with specific reward structures. \citet{goktas2022zero,sun2023zero} study zero-sum stochastic Stackelberg games where the sum of rewards of the two players is zero. \citet{kao2022decentralized, zhao2023online} study cooperative Stackelberg games where the leader and the follower share the same reward. We study general-sum Stackelberg games, a more generalized setting without assuming specific reward structures like the above. The lack of prior knowledge of the reward structures, however, makes the learning problem much more challenging. As a matter of fact, learning in repeated general-sum Stackelberg games remains an open problem. The setting in \citet{bai2021sample} is closer to us, where it also learns general-sum Stackelberg equilibrium from noisy bandit feedback. But their learning process needs a central device that queries each leader-follower action pair for sufficient times, making it essentially a centralized model or offline learning. We study an online learning problem in repeated \textit{general-sum} Stackelberg games, where the players act in a \textit{decentralized} and \textit{strategic} manner. Last and very importantly, a hidden assumption in these works is that the follower's best strategy is to best respond to the leader's actions. In addition to studying Stackelberg games with a best-responding follower (Section~\ref{sec:myopic_follower}), we also take a new perspective of a manipulative follower (Sections~\ref{sec:farsighted_omniscient}-\ref{sec:farsighted_bandit}), and show that manipulative strategies can indeed yield higher payoff for the follower in an online learning setting.

\noindent\textbf{Follower manipulation in Stackelberg games.} 
It is known that the leader has the first-mover advantage in Stackelberg games: an optimal commitment in a Stakelberg equilibrium always yields a higher (or equal) payoff for the leader than in any Nash equilibrium of the same game~\citep{von2010leadership}. This result is under the assumption that the leader has full information about the follower's reward, or that it can learn such information via interacting with a truthful follower~\citep{letchford2009learning,blum2014learning,haghtalab2016three,roth2016watch,peng2019learning}. In other words, they assume that follower will always play myopic best responses to the leader's strategy. Recent studies showed that a follower can actually manipulate a commitment leader, and induce an equilibrium different from the Stackelberg equilibrium by misreporting their payoffs~\citep{gan2019manipulating,gan2019imitative,nguyen2019imitative,birmpas2020optimally,chen2022optimal,chen2023learning}. In parallel with these works, we consider an \textit{online} learning setup where \textit{both players} learn their best strategies from noisy bandit feedback. This is in contrast to the above existing works which take the learning perspective of only the follower against a commitment leader in an offline setup.

\section{Preliminary}
A general-sum Stackelberg game is represented as a tuple $\{\mathcal{A}, \mathcal{B}, \mu_l, \mu_f\}$. $\mathcal{A}$ represents the action set of leader $l$, $\mathcal{B}$ represents the action set of follower $f$, and $|\mathcal{A}|=A$, $|\mathcal{B}|=B$. We denote $\mathcal{A}\times\mathcal{B}$ the joint action set of the leader and the follower. 
For any joint action $(a,b)\in \mathcal{A}\times \mathcal{B}$, we use $r_{l}(a, b) \in [0,1]$ and $r_{f}(a , b) \in [0,1]$ to respectively denote the noisy reward for the leader and the follower, with expectations $\mu_l(a,b) \in [0,1]$ and $\mu_f(a,b) \in [0,1]$. The follower has a response set to the leader's actions, $\mathcal{F}=\{\mathcal{F}(a)|a\in\mathcal{A}\}$, where $\mathcal{F}(a)$ is the response to leader's action $a$. 
The follower's \textit{best response} towards $a$ is defined as the action that maximizes the follower's true reward:
\begin{equation}
\mathcal{F}_{br}(a) = \argmax_{b\in\mB} \mu_f(a,b),
\end{equation}
We denote $\mathcal{F}_{br}=\{\mathcal{F}_{br}(a)|a\in\mathcal{A}\}$ as the best response set. 
For simplicity, we assume that the best response to each action $a$ is unique and the game has a unique Stackelberg equilibrium $(a_{se}, b_{se})$, 
\begin{equation}
    a_{se} = \argmax_{a\in\mA} \mu_l(a,\mFb(a)), \ b_{se}=\mathcal{F}_{br}(a_{se}).
\end{equation}
A \textit{repeated} general-sum Stackelberg game is played iteratively in a total of $T$ rounds. In each round $t$, the procedure is as follows:
\begin{itemize}[leftmargin=*]
\vspace{-2mm}
\setlength\itemsep{0pt}
    \item
    The leader plays an action $a_t \in \mathcal{A}$.
    \item
    The follower observes the leader's action $a_t$, and plays $b_t = \mathcal{F}_t(a_t)$ as the response.
    \item
    The leader receives a noisy reward $r_{l,t}(a_t,b_t)$.
    \item
    The follower receives reward information $r_{t}(a_t,b_t)$ which we will elaborate more in the following.
    \vspace{-2mm}
\end{itemize}

In the last step, as motivated by the example in Table~\ref{tab:FM_example}, we study two types of settings depending on the kind of reward information that is available to the follower:
\begin{itemize}[leftmargin=*]\vspace{-2mm}
\setlength\itemsep{0pt}
    \item 
    \textbf{Limited information.} The follower observes a (noisy) reward of its own: $r_{t}(a_t,b_t) = r_{f,t}(a_t,b_t)$. 
    \item 
    \textbf{Side information.} The follower has extra side information, meaning that it also knows the reward of the leader in addition to its own. This setting can be further divided into the case of an \textbf{omniscient follower} who knows the exact reward functions $\mu_l$ and $\mu_{f}$ directly; or the case of \textbf{noisy side information} where the follower learns the rewards from noisy bandit feedback in each round, i.e., $r_{t}(a_t,b_t) = (r_{l,t}(a_t,b_t), r_{f,t}(a_t,b_t))$. 
    \vspace{-2mm}
\end{itemize}

In the following, we will present our results for the limited information setting (Section \ref{sec:myopic_follower}), the omniscient follower setting (Section \ref{sec:farsighted_omniscient}) and the noisy side information setting (Section~\ref{sec:farsighted_bandit}).

\section{A myopic follower with limited information}\label{sec:myopic_follower}
We first investigate the follower learning problem with limited information, where the best response is truly the follower's best strategy. Then, We design appropriate online learning algorithms for the leader and prove their last-iterate convergence for the entire game involving both players.

\subsection{Algorithm for the myopic follower}
In the limited information setting, the follower does not know the entire game's payoff matrix, and therefore does not have the ability to manipulate the game. Hence, myopically learning the best response for each action $a\in \mA$ is indeed the best strategy for the follower. The learning problem for the follower then is equivalent to $A$ independent stochastic bandit problems, one for each $a\in\mA$ that the leader plays. Upper Confidence Bound (UCB) is a good candidate algorithm for the follower as it is asymptotically optimal in stochastic bandits problems~\citep{bubeck2012regret}.

In each round $t$, the follower selects its response based on UCB:
\[    
b_{t} = \argmax_{b\in\mB} \tucb_{f,t} (a_t, b),
\]
where $\text{ucb}_{f,t}(a,b)= \hmu_{f,t}(a,b)+ \sqrt{\frac{ 2  \log (T/\delta) }{n_t(a,b)}}.$
$n_t(a,b)=\max\big\{1, \sum_{\tau=1}^{t-1} \mathbb{I}\left\{a_\tau= a \land b_\tau= b \right\}\big\}$ is the number of times that action pair $(a,b)$ has been selected, and $\hmu_{f,t}(a,b)$ is the empirical mean of the follower's reward for $(a,b)$:
\[
\hmu_{f,t}(a,b) = \frac{1}{n_t(a,b)}\sum_{\tau=1}^{t-1} r_{f,\tau}(a_\tau,b_\tau)\mathbb{I}\left\{a_\tau= a \land b_\tau= b \right\}.
\]
Following the result of~\citep{auer2002finite}, for every stochastic bandit problem associated to $a\in \mA$, UCB needs approximately $\mO\left(\frac{B \log T}{\Delta^2_1}\right)$ rounds to find the best response, where $\Delta_1$ is the suboptimality gap to best response for all $a\in\mA$,
\[
    \Delta_1 = \min_{a\in\mA} \min_{b\neq \mFb(a)} \mu_f(a,\tbr(a)) - \mu_f(a,b).
\]
This directly leads to the following proposition:
\begin{proposition}
In a repeated general-sum Stackelberg game, if the follower uses UCB as the learning algorithm, with probability at least $1-\delta$, the follower's regret is bounded as:
\[
\begin{aligned}
\mathcal{R}^S_f(T) &= \mathbb{E}\left[\sum_{t=1}^T \mu_f\left(a_t, \text{Br}(a_t)\right) - \mu_f\left(a_t, b_t\right)\right] \\
&\leq \mO\left(  \frac{ AB \log (T/\delta)}{\Delta_1} \right).
\end{aligned}
\]

\end{proposition}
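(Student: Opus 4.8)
The plan is to exploit the decomposition the preceding discussion already sets up: for a fixed leader action $a \in \mA$, the follower's UCB indices $\tucb_{f,t}(a,\cdot)$ depend only on the rounds $\tau$ in which $a_\tau = a$, because both $n_t(a,b)$ and $\hmu_{f,t}(a,b)$ count only such rounds. Hence the follower is effectively running $A$ \emph{independent} stochastic bandit instances in parallel, the instance indexed by $a$ having arm set $\mB$, true means $\mu_f(a,\cdot)$, and optimal arm $\mFb(a)$. I would first regroup the regret by the leader's action,
\begin{equation*}
\mathcal{R}^S_f(T) = \mE\left[\sum_{a\in\mA}\sum_{t:\,a_t=a}\big(\mu_f(a,\mFb(a)) - \mu_f(a,b_t)\big)\right],
\end{equation*}
so that it suffices to bound the regret of each instance and sum.

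Next I would establish the usual ``good event.'' Writing $\Delta_{a,b} = \mu_f(a,\mFb(a)) - \mu_f(a,b)$ for the per-action suboptimality gaps, I would apply Hoeffding's inequality to the empirical mean of each pair $(a,b)$ after each possible number of pulls $n \in \{1,\dots,T\}$, and take a union bound over the $AB$ pairs and the $T$ possible counts. With the confidence radius $\sqrt{2\log(T/\delta)/n_t(a,b)}$, this shows that with probability at least $1-\delta$, simultaneously for all $t\le T$ and all $(a,b)$,
\begin{equation*}
|\hmu_{f,t}(a,b) - \mu_f(a,b)| \le \sqrt{\tfrac{2\log(T/\delta)}{n_t(a,b)}}.
\end{equation*}
The constant $2$ inside the radius makes the per-pair, per-count failure probability $\mO((\delta/T)^4)$, which comfortably absorbs the union bound.

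On this good event I would run the standard UCB argument inside each instance $a$: if a suboptimal arm $b \ne \mFb(a)$ is selected at round $t$, then $\tucb_{f,t}(a,b) \ge \tucb_{f,t}(a,\mFb(a)) \ge \mu_f(a,\mFb(a))$, and combining with the confidence bound on $\hmu_{f,t}(a,b)$ forces $n_t(a,b) \le \mO(\log(T/\delta)/\Delta_{a,b}^2)$. Each pull of $b$ contributes $\Delta_{a,b}$ to the regret, so the total contribution of arm $b$ in instance $a$ is $\mO(\log(T/\delta)/\Delta_{a,b})$. Bounding every gap below by $\Delta_1 = \min_{a}\min_{b\ne\mFb(a)}\Delta_{a,b}$ and summing over the at most $B$ suboptimal arms and the $A$ instances yields
\begin{equation*}
\mathcal{R}^S_f(T) \le \mO\!\left(\frac{AB\log(T/\delta)}{\Delta_1}\right),
\end{equation*}
as claimed.

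I do not expect a genuine obstacle here, since this is essentially the textbook gap-dependent UCB bound replicated $A$ times; the only point requiring care is the independence claim. Because the leader's actions $a_t$ are generated by its own (possibly adaptive) learning rule, the number of rounds in which each instance $a$ is played is itself random and may depend on the follower's past responses. The resolution is that the concentration step above is made \emph{uniform} over all counts $n\le T$ via the union bound, so the per-instance UCB analysis never needs to condition on how many times $a$ is played; it holds on the good event regardless. This ``anytime'' treatment is what lets the $A$ instances be analyzed as if independent despite the coupling through the leader's adaptive choices.
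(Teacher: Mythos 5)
Your proof is correct and takes essentially the same route as the paper, which simply decomposes the follower's learning problem into $A$ independent stochastic bandit instances (one per leader action) and invokes the standard gap-dependent UCB bound of \citet{auer2002finite} for each, summing over the $A$ instances and $B$ arms to obtain $\mO\!\left(AB\log(T/\delta)/\Delta_1\right)$. The paper delegates the details to the citation, whereas you fill them in explicitly --- including a correct handling of the adaptivity of the leader's action sequence via concentration that is uniform over all pull counts --- but the underlying argument is the same.
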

This proposition shows that a myopic follower achieves no Stackelberg regret learning when it decomposes the Stackelberg game into $A$ stochastic bandits problems and uses the UCB algorithm to respond, no matter what learning algorithm the leader uses.


It should be emphasized that the utilization of arbitrary no-regret algorithms does not inherently ensure the convergence to the Stackelberg equilibrium in the general-sum Stackelberg game. The following result is provided to support this critical observation:
\begin{theorem}\label{thm:non-convergence}[Non-Convergence to the SE]
Applying the UCB-UCB algorithm within a repeated general-sum Stackelberg game can result in the leader suffering regret linear in $T$, which implies the game fails to converge to the Stackelberg equilibrium.
\end{theorem}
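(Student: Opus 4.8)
The plan is to prove the claim by constructing a single $2\times 2$ general-sum Stackelberg game on which the coupled UCB--UCB dynamics provably avoid the Stackelberg action $a_{se}$ on a constant fraction of the $T$ rounds. The mechanism I want to exploit is a \emph{chicken-and-egg} coupling between the two learners. The leader runs UCB on its \emph{own} observed rewards, so its empirical value of an action $a$ is an average of $r_{l,\tau}(a,b_\tau)$ over the follower's responses $b_\tau$; but those responses are refined by the follower only on the rounds in which the leader actually plays $a$. If $\mu_l$ is small when the follower mis-responds to $a_{se}$, then a follower that has not yet learned its best response drives the leader's estimate of $a_{se}$ down, the leader then stops exploring $a_{se}$, and this starves the follower of exactly the samples it needs to correct itself --- a self-confirming loop.

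Concretely, I would take $\mA=\{a_1,a_2\}$, $\mB=\{b_1,b_2\}$ with expected payoffs $(\mu_l,\mu_f)$ given by $(a_1,b_1)\!:\!(0.9,\tfrac12+\epsilon)$, $(a_1,b_2)\!:\!(0,\tfrac12)$, $(a_2,b_1)\!:\!(0.65,1)$, $(a_2,b_2)\!:\!(0.65,0)$, for a small constant $\epsilon$ to be fixed below. A direct check gives $\mFb(a_1)=b_1$ and $\mFb(a_2)=b_1$, and since $\mu_l(a_1,\mFb(a_1))=0.9>0.65=\mu_l(a_2,\mFb(a_2))$ we have $a_{se}=a_1$, $b_{se}=b_1$. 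The two features that drive the construction are: (i) $\mu_l(a_1,b_2)=0$ lies far below $\mu_l(a_2,\cdot)=0.65$, so a mis-responding follower makes $a_1$ look worse than $a_2$ to the leader; and (ii) the follower's suboptimality gap at $a_1$ is only $\epsilon$, whereas its gap at $a_2$ is large and $\mu_l(a_2,\cdot)$ does not depend on $b$, making $a_2$ a reward that the leader can estimate reliably regardless of the follower.

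The core is a bootstrap argument carried out on the standard good event that every empirical mean lies inside its UCB confidence interval for all $t\le T$, which holds with probability at least $1-\delta$ by a union bound. First I would argue that as long as the follower has collected only $\mO(\log(T/\delta))$ samples at $a_1$, its two arms are statistically indistinguishable (their means differ by $\epsilon$, far below the confidence width), so its responses at $a_1$ stay roughly balanced and the leader's empirical value of $a_1$ is pinned near $\tfrac12(0.9+0)=0.45<0.65$. Second, given this, the leader's UCB rule pulls $a_1$ only until its exploration bonus drops below the gap $0.65-0.45$, i.e. $n_t(a_1)=\mO(\log(T/\delta))$ in total. Third --- the self-consistency step --- these $\mO(\log(T/\delta))$ leader pulls hand the follower far fewer than the $\Theta(\log(T/\delta)/\epsilon^2)$ samples needed to resolve the $\epsilon$-gap at $a_1$, so the follower never learns $\mFb(a_1)$ and the loop closes. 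Because both counts scale as $\log(T/\delta)$, choosing $\epsilon$ below a threshold that depends only on the leader's payoffs (and not on $T$) makes the trap persist at \emph{every} horizon; the leader then plays $a_2$ on all but $\mO(\log(T/\delta))$ rounds and suffers Stackelberg regret at least $(0.9-0.65)\bigl(T-\mO(\log(T/\delta))\bigr)=\Omega(T)$, which precludes last-iterate convergence to $(a_{se},b_{se})$.

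The main obstacle is the third step, since the two UCB processes are coupled and the single-agent UCB guarantees cannot be invoked directly. I expect to discharge it by an induction on $t$ that maintains two mutually reinforcing invariants --- that $n_t(a_1)=\mO(\log(T/\delta))$ and that the follower's two empirical means at $a_1$ stay within $o(1)$ of each other --- where each invariant is used to re-establish the other at the next step. The delicate point is to rule out the low-probability sample paths on which an early streak of favorable follower-noise resolves the $\epsilon$-gap before the leader abandons $a_1$; this is precisely where the good event and the quantitative separation between $\epsilon$ and the leader's gap $0.65-0.45$ enter.
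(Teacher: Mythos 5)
Your construction and mechanism are essentially the paper's own: the paper also proves Theorem~1 by exhibiting a specific $2\times 2$ game (Table~2 in the appendix) in which the follower's suboptimality gap at the Stackelberg-optimal leader action is tiny ($0.01$) while the leader's payoff collapses to $0$ under the follower's wrong response, so that the leader's UCB index for $a_{se}$ is dragged below that of the other action, the leader plays $a_{se}$ only $\mO(\log(T/\delta))$ times, the follower is thereby starved of the samples needed to resolve its gap, and linear Stackelberg regret follows. Your chicken-and-egg loop, the $\mO(\log(T/\delta))$ bound on $n_t(a_{se})$, and the $\Omega(T)$ regret conclusion all match.

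The one substantive difference is that the paper sidesteps your ``delicate point'' entirely by assuming the players receive their exact mean rewards each round, which makes the balance between the follower's two arm counts at $a_{se}$ an immediate arithmetic consequence of the UCB rule. Your plan keeps the noise, and there the step as written has a gap: the event that every empirical mean lies within its UCB radius of the truth does \emph{not} imply that the follower's responses at $a_1$ stay balanced. On that event one only gets $\hat{\mu}_f(a_1,b_2)\geq \mu_f(a_1,b_2)-\sqrt{2\log(T/\delta)/n_t(a_1,b_2)}$, and this downward slack exactly cancels the exploration bonus, so the index of $b_2$ is only guaranteed to be at least $\mu_f(a_1,b_2)$; a sample path consistent with the good event can therefore starve $b_2$, in which case the follower best-responds at $a_1$ almost always, the leader's estimate of $a_1$ recovers toward $0.9$, and the game converges to the Stackelberg equilibrium after all. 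To close this you need a strictly tighter concentration radius than the algorithm's bonus (so that the index of the undersampled arm retains a positive exploration advantage), or the paper's noiseless simplification; the good event alone is not enough to run the induction on your two invariants.
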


So to further understand the convergence of the entire game, we investigate two kinds of algorithms for the leader, a UCB-style algorithm and an EXP3-style algorithm, two of the most prevalent algorithms in the online learning literature. Combining the follower's UCB algorithm, we have two decentralized learning paradigms: EXP3-UCB and UCB-UCB. It is worth noting that UCB-UCB is also studied in~\citet{kao2022decentralized} which focuses on a cooperative Stackelberg game setting. In the following, we present our first two major results on the last-iterate convergence for the two learning paradigms. 

\subsection{Last-iterate convergence of EXP3-UCB}
We next introduce a variant of EXP3 for the leader. We denote the unbiased reward estimation of the reward $r_{l,t}(a, \mF_{t}(a))$ the leader receives in round $t$ as $\tr_t(a) = \frac{r_{l,t}(a, \mF_{t}(a))}{\Tilde{x}_{t}(a)} \mI\{a_t = a\}$, where $x_{t}(a)$ is the weight for each action $a\in \mA$. It is updated as 
$x_{t+1}(a) = \frac{\exp(y_t(a))}{\sum_{a\in\mA}\exp(y_t(a))}, \ \text{where} \ y_t(a) = \sum_{\tau=1}^t \eta \tr_\tau(a)$. Let $\bx_t=\< x_t(a)\>$ be the weight vector of all actions $a\in \mA$. It is initialized as a discrete uniform distribution: $\bx_{1} = [1/A, \cdots, 1/A]^\top$.
In round $t$, the leader selects action $a_t$ according to the distribution 
\[
    \tbx_t = (1-\alpha) \bx_t + \alpha [1/A, \cdots, 1/A]^\top.
\]
The second term on the right-hand side enforces a random probability of taking any action in $\mathcal{A}$ to guarantee a minimum amount of exploration. As we will show in our proof of Theorem~\ref{thm:exp3-ucb}, the extra $\alpha$ exploration is essential to the convergence to Stackelberg equilibrium as it allows the follower to do enough exploration to find its best responses using its UCB sub-routine. In the following, we will simply use EXP3 to refer to the above variant of EXP3 with the explicit uniform exploration.

\begin{theorem}\label{thm:exp3-ucb}[Last-iterate convergence of EXP3-UCB under limited information]
In a limited information setting of a repeated general-sum Stackelberg game with noisy bandit feedback, applying EXP3-UCB, with $\alpha = \mathcal{O}\left(T^{-\frac{1}{3}}\right)$, $\eta = \mathcal{O}\left(T^{-\frac{1}{3}}\right)$, with probability at least $1-3\delta$,
\[
\begin{aligned}
&\mP\big[a_T \neq a_{se} \big] \leq \alpha + \\
&A\exp\!\left( -\Delta_2  T^{\frac{2}{3}} + 2\sqrt{2A \log\frac{2}{\delta}} T^{\frac{1}{3}} +  \frac{C_1 A B}{\Delta_1^2}\log \frac{T}{\delta} \log \frac{1}{\delta} \right),
\end{aligned}
\]
where $C_1$ is a constant, $\Delta_2$ is the leader's suboptimality reward gap to Stackelberg equilibrium: 
\[
    \Delta_2 = \min_{a\neq a_{se}} \mu_l(a_{se},b_{se}) - \mu_l(a,\tbr(a)).
\]
\end{theorem}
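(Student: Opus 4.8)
The plan is to show that the last-iterate failure probability is governed by the log-ratio of EXP3 weights between $a_{se}$ and any competing action, and that this ratio is driven negative at rate $\Delta_2$ once the follower's UCB sub-routine has stabilized. First I would decompose the event by conditioning on whether round $T$ is an exploration step or an exploitation step of the leader's EXP3. Since $\mP[a_T = a] = (1-\alpha)x_T(a) + \alpha/A$, the uniform-exploration mass contributes at most $\alpha$, so it suffices to bound $\sum_{a \neq a_{se}} x_T(a)$. Using $x_T(a) = \exp(y_{T-1}(a)) / \sum_{a'} \exp(y_{T-1}(a'))$ and keeping only the $a_{se}$ term in the denominator gives $x_T(a) \le \exp\!\big(y_{T-1}(a) - y_{T-1}(a_{se})\big)$, so the whole argument reduces to an upper bound on the $A-1$ log-ratios $y_{T-1}(a) - y_{T-1}(a_{se}) = \eta \sum_{\tau=1}^{T-1}\big(\tr_\tau(a) - \tr_\tau(a_{se})\big)$, after which summing $A$ exponentials produces the leading factor $A$.

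The core is a drift-plus-deviation analysis of this sum. I would split the rounds into a ``converged'' phase, in which the follower's UCB has already identified $\mFb(a_\tau)$ for the action played, and a ``warm-up'' phase in which it has not. On converged rounds, $\mE[\tr_\tau(a) - \tr_\tau(a_{se}) \mid \text{history}] = \mu_l(a,\mFb(a)) - \mu_l(a_{se}, b_{se}) \le -\Delta_2$ by the definition of $\Delta_2$, so these rounds supply a negative drift of $-\eta\Delta_2$ each, totaling $\approx -\eta\Delta_2 T = -\Delta_2 T^{2/3}$ under $\eta = \mO(T^{-1/3})$ --- this is the dominant term. The warm-up phase is where I invoke the follower's guarantee: conditioning on the high-probability event that UCB needs at most $\mO\!\big(B\log(T/\delta)/\Delta_1^2\big)$ samples per action to lock onto its best response (the bandit identification bound underlying Proposition~1), and on a Chernoff bound guaranteeing that the leader's forced $\alpha/A$ exploration delivers those samples within $\tau_0 = \mO\!\big(\tfrac{AB}{\alpha\Delta_1^2}\log(T/\delta)\log(1/\delta)\big)$ rounds, the warm-up contributes at most $\eta\tau_0$ of ``wrong-direction'' drift. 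With $\eta = \Theta(\alpha)$ this is exactly $\mO\!\big(\tfrac{AB}{\Delta_1^2}\log(T/\delta)\log(1/\delta)\big)$, the third term in the exponent.

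For the deviation term I would write $y_{T-1}(a) - y_{T-1}(a_{se})$ as its conditional mean plus a martingale-difference sum and apply a Freedman/Bernstein-type inequality. The estimators $\tr_\tau$ have range as large as $A/\alpha$, so a plain Hoeffding bound would give a fatal $\sqrt{T}$ deviation; instead I would use the variance bound $\mE[\tr_\tau(a)^2 \mid \text{history}] \le A/\alpha$ (from importance weighting with $\tx_\tau(a) \ge \alpha/A$), yielding a deviation of order $\eta\sqrt{(A/\alpha)\,T\log(1/\delta)} = \mO\!\big(\sqrt{A\log(1/\delta)}\,T^{1/3}\big)$ once $\eta = \alpha = \mO(T^{-1/3})$ is substituted --- matching the middle term $2\sqrt{2A\log(2/\delta)}\,T^{1/3}$. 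Assembling the drift and deviation bounds inside the exponential, taking a union bound over the three high-probability events (follower UCB identification, exploration-count concentration, and the reward-estimate martingale), and summing over the $A-1$ competitors gives the stated inequality with total failure probability $3\delta$.

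I expect the main obstacle to be the coupling between the two learners in the warm-up phase: the leader's importance-weighted estimates $\tr_\tau$ can transiently favor a suboptimal $a$ over $a_{se}$ while the follower is still exploring, and bounding the \emph{total} such adverse drift requires simultaneously (i) a uniform-in-time high-probability guarantee that the follower has best-responded on all but $\tau_0$ rounds, and (ii) control of $\tau_0$ through the leader's own exploration schedule $\alpha$. Getting the $\alpha$--$\eta$ scaling to make the warm-up term $T$-independent (polylogarithmic) while keeping the deviation term $o(T^{2/3})$ is the delicate balance that forces the $T^{-1/3}$ choices, and verifying that the variance bound survives the conditioning on these events (so Freedman still applies) is the most error-prone step.
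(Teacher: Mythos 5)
Your proposal follows essentially the same route as the paper's proof: bounding $\mP[a_T\neq a_{se}]$ by $\alpha$ plus the sum of weight ratios $\exp(y_T(a)-y_T(a_{se}))$, then controlling each log-ratio via a drift term $-\Delta_2\eta T$, a martingale deviation term obtained from a variance bound $\mE[\tr_t(a)^2\mid\mathcal{G}_{t-1}]\le A/\alpha$ (the paper's Lemma~\ref{lemma:concentration}), and a warm-up term that combines the follower's UCB sample complexity $T_{wbr}(a)=\mO(B\log(T/\delta)/\Delta_1^2)$ with the $\mO((A/\alpha)\log(1/\delta))$ inter-play gap guaranteed by the forced exploration, finishing with a union bound over the three events. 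The decomposition, the role of the $\alpha$--$\eta=\mO(T^{-1/3})$ scaling, and the identification of the learner-coupling in the warm-up phase as the delicate step all match the paper's argument.
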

The proof is in Appendix~\ref{proof:exp3ucb}. 


\subsection{Last-iterate convergence of UCBE-UCB}
When the leader uses a UCB-style algorithm, to guarantee that the game converges to the Stackelberg equilibrium, it requires that the UCB term will always upper bound $\mu_{l}(a,\tbr(a))$ with high probability. This is not guaranteed with a vanilla UCB algorithm (Theorem~\ref{thm:non-convergence}) since the follower's response is unstabilized and not necessarily the best response before the follower's UCB algorithm converges. Therefore, we introduce a variant of UCB, called UCB with extra Exploration (UCBE), with a bonus term $S_0$ that ensures upper bounding $\mu_{l}(a,\tbr(a))$ with high probability. In UCBE, the leader chooses action $a_{t}=\argmax_{a\in\mA} \text{ucbe}_{l,t}(a)$ in round $t$ as follows
\begin{equation}
\text{ucbe}_{l,t}(a)= \hmu_{l,t}(a)+\sqrt{\frac{S_0}{n_t(a)}}, \ S_0 = \mO\left( \frac{B}{\varepsilon^3} \log\frac{ABT}{\delta}\right).
\end{equation}
Here $\varepsilon = \min\left\{\Delta_1, \Delta_2\right\}$, $\hmu_{l,t}(a)$ is the empirical mean of leader's action $a$:
$\hmu_{l,t}(a) = \frac{1}{n_t(a)}\sum_{\tau=1}^{t-1} r_{f,\tau}(a_\tau,b_\tau)\mathbb{I}\left\{a_\tau= a \right\}$,  and $n_t(a) = \max\big\{1,\sum_{\tau=1}^{t-1} \mathbb{I}\left\{a_\tau= a \right\}\big\}$.

\begin{theorem}\label{thm:ucbe-ucb}[Last-iterate convergence of UCBE-UCB  under limited information]
In a limited information setting of a repeated general-sum Stackelberg game with noisy bandit feedback, applying UCBE-UCB with $S_0 = \mathcal{O}\left(\frac{B}{\epsilon^3}\log\frac{ABT}{\delta}\right)$, $\varepsilon = \min\{\Delta_1,\Delta_2\}$, for $T \geq \mathcal{O}\left(\frac{A S_0}{\Delta_1^2}\right)$, with probability at least $1-3\delta$, we have:
\[
\mathbb{P}\big(a_T \neq a_{se}\big) \leq \frac{\delta}{T}.
\]
\end{theorem}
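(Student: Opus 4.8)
The plan is to show that the inflated bonus $S_0$ keeps the leader's UCBE index optimistic despite the follower's pre-convergence mistakes, and then to run a standard ``elimination of suboptimal arms'' argument sharpened into a last-iterate guarantee. I would organize everything around three good events, each of probability at least $1-\delta$ so that their intersection holds with probability at least $1-3\delta$: (i) $\mathcal{E}_f$, that the follower's UCB subroutine for each leader action $a$ commits at most $N_0 = \mO\!\big(\tfrac{B}{\Delta_1^2}\log\tfrac{T}{\delta}\big)$ mistakes $b_\tau\neq\mFb(a)$, which is exactly the per-arm pull bound behind the earlier Proposition; (ii) $\mathcal{E}_l$, a Hoeffding bound stating that the leader's average over the rounds in which the follower \emph{already} played $\mFb(a)$ stays within $\sqrt{2\log(ABT/\delta)/m}$ of $\mu_l(a,\mFb(a))$ for every arm $a$ and count $m$; and (iii) a per-round version of (ii) specialized to the horizon round $t=T$, which is where the final $\delta/T$ factor originates.

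The heart of the argument, and the step I expect to be the main obstacle, is the optimism property: on $\mathcal{E}_f\cap\mathcal{E}_l$, $\text{ucbe}_{l,t}(a)\geq \mu_l(a,\mFb(a))$ for all $a$ and all $t$. I would prove this by splitting $\hmu_{l,t}(a)$ into the ``good'' rounds (follower already at $\mFb(a)$, concentrating around $\mu_l(a,\mFb(a))$ by $\mathcal{E}_l$) and the at most $N_0$ ``bad'' rounds, whose contribution biases $\hmu_{l,t}(a)$ by at most $\min\{1,\,N_0/n_t(a)\}$. Thus $\mu_l(a,\mFb(a))-\hmu_{l,t}(a)\leq \sqrt{2\log(ABT/\delta)/n_t(a)}+\min\{1,N_0/n_t(a)\}$, and the whole point of taking $S_0=\mO(\tfrac{B}{\varepsilon^3}\log\tfrac{ABT}{\delta})$ is that it dominates $4N_0+8\log(ABT/\delta)$ (using $\varepsilon\le\Delta_1$, so $\varepsilon^3\le\Delta_1^2$), which makes $\sqrt{S_0/n_t(a)}$ exceed the right-hand side for \emph{every} $n_t(a)$; the two regimes $n_t(a)\le N_0$ (bias $\le 1$, need $S_0\gtrsim n_t(a)$) and $n_t(a)>N_0$ (bias $\le N_0/n_t(a)$, need $S_0\gtrsim N_0^2/n_t(a)$) are checked separately. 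This is precisely the step that collapses for vanilla UCB and is why Theorem~\ref{thm:non-convergence} holds: the bonus must be large enough to absorb the adversarial-looking bias injected by the unconverged follower.

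Running the same decomposition in the reverse direction gives the matching upper bound $\text{ucbe}_{l,t}(a)\leq \mu_l(a,\mFb(a))+2\sqrt{S_0/n_t(a)}$. Whenever a suboptimal $a\neq a_{se}$ is selected at round $t$ we have $\mu_l(a_{se},b_{se})\le\text{ucbe}_{l,t}(a_{se})\le\text{ucbe}_{l,t}(a)$ by optimism and the argmax rule, and combining with $\mu_l(a,\mFb(a))\le\mu_l(a_{se},b_{se})-\Delta_2$ yields $\Delta_2\le 2\sqrt{S_0/n_t(a)}$, i.e. $n_t(a)\le 4S_0/\Delta_2^2$. Hence on the good event every suboptimal arm is played at most $\mO(S_0/\Delta_2^2)$ times and is never selected again once its count crosses that threshold; the total number of suboptimal pulls is therefore $\mO(AS_0/\Delta_2^2)$. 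The hypothesis $T\ge\mO(AS_0/\Delta_1^2)$ makes the horizon long enough both for the follower to converge on each arm and for every suboptimal arm to be exhausted, after which only $a_{se}$ can remain viable.

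Finally, for the single round $T$: on $\mathcal{E}_f\cap\mathcal{E}_l$ the index ordering forces $a_T=a_{se}$ unless the fresh round-$T$ deviation controlled by event (iii) occurs, namely $\text{ucbe}_{l,T}(a_{se})$ failing to be the maximum. Calibrating that per-round Hoeffding bound at confidence $\delta/T$ — which is exactly the level encoded in the $\log(ABT/\delta)$ inside $S_0$ after union-bounding over arms — gives $\mathbb{P}(a_T\neq a_{se})\le\delta/T$, while the outer $1-3\delta$ accounts for $\mathcal{E}_f$ and the uniform-over-time version of $\mathcal{E}_l$ used to reach the exploitation regime. The delicate point throughout is the optimism step: a single fixed bonus $S_0$ must simultaneously dominate the statistical fluctuation (wanting $S_0\gtrsim\log$) and the follower-induced bias uniformly over all counts (wanting $S_0\gtrsim N_0$), and it is this requirement that ties the follower's $\Delta_1$-dependent convergence to the leader's $\Delta_2$-dependent elimination, the extra $\varepsilon^{-1}$ slack in $S_0$ providing the margin.
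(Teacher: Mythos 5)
Your setup---the good events, the decomposition of $\hmu_{l,t}(a)$ into rounds where the follower has and has not yet found $\mFb(a)$, and the choice of $S_0$ large enough to dominate both the Hoeffding fluctuation and the $T_{wbr}(a)$-induced bias---is sound and matches the role of Lemma~\ref{lemma:ucb bound} and Lemma~\ref{lemma: bound for wbr and wwr} in the paper. The genuine gap is in the passage from there to the \emph{last-iterate} claim $\mathbb{P}(a_T\neq a_{se})\le\delta/T$. You run the classical optimism-plus-elimination argument: a suboptimal $a$ can only be selected while $n_t(a)\le 4S_0/\Delta_2^2$, hence at most $\mO(AS_0/\Delta_2^2)$ suboptimal pulls in total. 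But this controls \emph{how many} suboptimal pulls occur, not \emph{when}: nothing prevents round $T$ itself from being one of them. A suboptimal arm can sit dormant with a small count and a frozen, still-competitive index while $a_{se}$ absorbs pulls, and then win the argmax exactly at round $T$ once $\sqrt{S_0/n_T(a_{se})}$ has decayed. Your assertion that $T\ge\mO(AS_0/\Delta_1^2)$ is ``long enough for every suboptimal arm to be exhausted'' is exactly the missing step: the elimination bound is an \emph{upper} bound on each $n_T(a)$, and you supply no matching lower bound forcing every suboptimal count past its threshold by time $T$. The per-round Hoeffding event (iii) does not repair this, because the failure mode is not a concentration failure at round $T$; on your good events, optimism is perfectly consistent with $a_T\neq a_{se}$. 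This is precisely the average-iterate versus last-iterate distinction that the theorem is about.

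The paper closes this hole by a different mechanism. Since $\text{ucbe}_{l,t}(a)=\hmu_{l,t}(a)+\sqrt{S_0/n_t(a)}>2$ whenever $n_t(a)<S_0/4$, while any arm with $n_t(a)\ge S_0$ has index at most $2$, the inflated bonus \emph{forces} every arm to be pulled $\Omega(S_0)=\Omega\!\left(\frac{B}{\varepsilon^3}\log\frac{ABT}{\delta}\right)$ times once $T\ge\mO(AS_0/\Delta_1^2)$. With every $n_T(a)$ that large, the paper shows $|\hmu_{l,T}(a)-\mu_l(a,\mFb(a))|\le\varepsilon/4$ for all $a$ (after absorbing the $T_{wbr}(a)$ corrupted rounds, which is where the extra factor $1/\varepsilon$ in $S_0$ is spent) and that the bonus term has shrunk below $\varepsilon/8$, so \emph{every} index lies within $3\varepsilon/8<\Delta_2/2$ of $\mu_l(a,\mFb(a))$ and the argmax at round $T$ is deterministically $a_{se}$ on the good event. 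In other words, the proof is a two-sided uniform-concentration argument built on forced exploration, not an optimism/elimination argument. If you want to keep your route, you would need an additional argument lower-bounding each suboptimal arm's count by round $T$ (or tracking when its frozen index permanently falls below $\text{ucbe}_{l,t}(a_{se})$); the forced-exploration observation is the cheapest way to obtain it.
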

The proof is in Appendix~\ref{proof:ucbeucb}. 
This result shows that the game last-iterate converges to Stackelberg equilibrium using the decentralized online learning paradigm UCBE-UCB. It is worth noting that this is a further step after \citet{kao2022decentralized} who prove \textit{average convergence} of UCB-UCB in \textit{cooperative} Stackelberg games.

\section{A manipulative and omniscient follower}\label{sec:farsighted_omniscient}
In some real-world settings, the follower has extra side information about the leader's reward structure. As shown in the example in Table~\ref{tab:FM_example}, a strategic follower can use that information to manipulate the leader's learning and induce the convergence of the game into a more desirable equilibrium. To start with, we study a simpler case where the follower is omniscient \citep{zhao2023online}, i.e., it knows the exact true rewards of the players. 
A manipulative follower aims to maximize its average reward:
\begin{equation}\label{eq:fbm}
    \max_{\mF} \frac{1}{T}\sum\nolimits_{t=1}^T \mu_{f}(a_t, \mF(a_t)),\ T\to \infty,
\end{equation}
where the leader's action sequence $\left\{a_t\right\}_{t=1}^T$ is generated by the leader's no-regret learning algorithm. When the follower uses the response set $\mF$, the leader (who only observes its own reward signals) will take actions that maximize its own reward given $\mF$. When $\mF$ is fixed, the learning problem for the leader reduces to a classical stochastic bandit problem, and the true reward for each action $a\in\mA$ is $\mu_l(a, \mathcal{F}(a))$. The leader will eventually learn to take action $a' \in \argmax_{a\in\mA}\mu_l(a,\mathcal{F}(a))$ when using no-regret learning. This implies that the leader can be misled by the follower's manipulation. 

Formulating the leader's action selection as a constraint, when $T\to \infty$ (i.e., at equilibrium), the above equation can be re-written in a more concrete form as:
\begin{equation}\label{eq: opt2}
\begin{aligned}
\max_{\mathcal{F},(a^\prime,b^\prime)}  \quad &\mu_f(a^\prime,b^\prime) \\
\text{s.t.} \quad  &\mu_l(a^\prime,b^\prime) > \max_{a\neq a^\prime} \mu_l(a,\mathcal{F}(a)), \ b^\prime = \mF(a^\prime)
\end{aligned}
\end{equation}
For simplicity, we do not consider the tie-breaking rules for the leader. A more general formulation considering tie-breaking rules for the leader is discussed in Appendix~\ref{proof:pessimistic tie-breaking rule}. 
For a manipulation strategy $\mF$, it is associated with an action pair $(a^\prime, b^\prime)$. We call $\left\{\mF, (a^\prime,b^\prime)\right\}$ a \textit{qualified manipulation} if it satisfies the constraint in Eq.\eqref{eq: opt2}. We denote an optimal solution for the follower as $\mathcal{F}_{opt}$, and $a_{fm}=\argmax_{a\in \mathcal{A}} \mu_l(a,\mathcal{F}_{opt}(a))$, $b_{fm}=\mathcal{F}_{opt}(a_{fm})$.
We assume that $(a_{fm},b_{fm})$ is unique for simplicity.
We can verify that $\left\{\mF_{br}, (a_{se},b_{se})\right\}$ is a qualified manipulation:
\[
\mu_l(a_{se},b_{se}) > \max_{a\neq a_{se}} \mu_l(a, \tbr(a)).
\]
Because $(a_{fm},b_{fm})$ corresponds to the optimal manipulation (thus maximal reward), we have:
\begin{proposition}\label{prop:best manipulation gap}
For any general-sum Stackelberg game with reward class $\mu_l(\cdot,\cdot)$, $\mu_f(\cdot,\cdot)$, if the Stackelberg equilibrium is unique, then the reward gap between $(a_{fm},b_{fm})$ and $(a_{se},b_{se})$ satisfies:
\begin{equation}
    \text{Gap}(a_{fm},a_{se}) = \mu_f(a_{fm},b_{fm}) - \mu_f(a_{se},b_{se}) \geq 0.
\end{equation}
The second equality holds if and only if $(a_{se},b_{se}) = (a_{fm},b_{fm})$.
\end{proposition}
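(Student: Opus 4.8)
The plan is to read Proposition~\ref{prop:best manipulation gap} off directly from the structure of the optimization in Eq.\eqref{eq: opt2}, viewing it as a maximization of $\mu_f(\ap,\bp)$ over the finite collection of qualified manipulations $\{\mF,(\ap,\bp)\}$ (finite because $\mF$ ranges over maps from $\mA$ to $\mB$, so the optimum is attained). The first step I would take is to confirm that $\{\mFb,(a_{se},b_{se})\}$ is a genuine feasible point. Uniqueness of the Stackelberg equilibrium makes $a_{se}$ the \emph{strict} maximizer of $\mu_l(a,\mFb(a))$, so with $b_{se}=\mFb(a_{se})$ we get $\mu_l(a_{se},b_{se}) > \max_{a\neq a_{se}}\mu_l(a,\mFb(a))$, which is exactly the constraint in Eq.\eqref{eq: opt2}. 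This is the qualified manipulation already exhibited above the proposition, and its objective value is $\mu_f(a_{se},b_{se})$.

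For the inequality, I would invoke optimality: since $(a_{fm},b_{fm})$ is the pair induced by an optimal solution $\mF_{opt}$, the value $\mu_f(a_{fm},b_{fm})$ is the maximum of $\mu_f(\ap,\bp)$ over all qualified manipulations. Comparing this maximum against the value attained at the feasible point $\{\mFb,(a_{se},b_{se})\}$ yields $\mu_f(a_{fm},b_{fm}) \geq \mu_f(a_{se},b_{se})$, i.e.\ $\text{Gap}(a_{fm},a_{se}) \geq 0$. For the equality characterization, the ``if'' direction is immediate. For ``only if'', I would argue by optimality again: if $\text{Gap}=0$ then $\mu_f(a_{se},b_{se})$ equals the optimal value of Eq.\eqref{eq: opt2}, so the feasible point $\{\mFb,(a_{se},b_{se})\}$ itself attains the optimum and $(a_{se},b_{se})$ is an action pair induced by an optimal manipulation. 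The standing uniqueness assumption on $(a_{fm},b_{fm})$ then forces $(a_{se},b_{se})=(a_{fm},b_{fm})$.

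The delicate point, and the step I expect to require the most care, is the ``only if'' direction: it relies on interpreting the assumption ``$(a_{fm},b_{fm})$ is unique'' as uniqueness of the \emph{induced} optimal action pair (so that two distinct optimal manipulations cannot induce different pairs of equal follower reward), and on the strictness of the constraint inequality, which is precisely what uniqueness of the Stackelberg equilibrium supplies. Everything else is a one-line feasibility-plus-optimality comparison, so no lengthy computation is anticipated.
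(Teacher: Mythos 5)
Your proposal is correct and follows essentially the same route as the paper: the paper's own (inline) argument is exactly the observation that $\{\mathcal{F}_{br},(a_{se},b_{se})\}$ is a qualified manipulation (feasible for Eq.~\eqref{eq: opt2}, with strictness of the constraint supplied by uniqueness of the Stackelberg equilibrium) followed by the optimality comparison against $(a_{fm},b_{fm})$. Your handling of the equality case via the assumed uniqueness of $(a_{fm},b_{fm})$ matches the paper's standing assumptions, so nothing further is needed.
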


\subsection{Follower's best manipulation (FBM)}
\setlength{\textfloatsep}{1mm}
\begin{algorithm}[ht]
\caption{Follower's best manipulation (FBM)}
\label{alg:FMS}
\renewcommand{\algorithmicrequire}{\textbf{Input:}}
\renewcommand{\algorithmicensure}{\textbf{Output:}}
\begin{algorithmic}[1]
\REQUIRE Candidate set $\mathcal{K}=\mathcal{A} \times \mathcal{B}$, $\mu_l(\cdot,\cdot)$, $\mu_f(\cdot,\cdot)$
 \STATE Candidate manipulation pair $(a^\prime,b^\prime)=\argmax_{(a,b)\in\mathcal{K}}\mu_f(a,b)$
 \STATE $\mathcal{F}=\{\mathcal{F}(a^\prime)=b^\prime\}\cup\{\mathcal{F}(a)=\twr(a)=\argmin_{b\in\mathcal{B}}\mu_l(a,b): a\neq a^\prime\}$
 \IF{$\max_{a\neq a^\prime} \mu_l(a,\mathcal{F}
 (a))\geq\mu_l(a^\prime,b^\prime)$}
 \STATE Eliminate $(a',b')$ from candidate set: $\mathcal{K}\leftarrow \mathcal{K} \backslash (a^\prime,b^\prime)$ 
 \STATE Return to Line 1
\ENDIF
\ENSURE The response function $\mathcal{F}_{opt}=\mathcal{F}$
\end{algorithmic}
\label{alg:FBM}
\end{algorithm}
Based on the insights from the example in Table~\ref{tab:FM_example} and Proposition~\ref{prop:best manipulation gap}, we propose Follower's Best Manipulation (FBM, Algorithm~\ref{alg:FBM}), a greedy algorithm that finds the best manipulation strategy for an omniscient follower. On the high level, the key idea is to find a response set $\mF$ that exaggerates the difference in the leader's reward when using $a'$ (an action that leads to a large follower's reward) compared to other actions (Line 2).

$\mK$ is the candidate manipulation pair set which is initialized as the entire action space $\mathcal{A} \times \mathcal{B}$. 
It starts by selecting the potential manipulation pair $(a^\prime, b^\prime)$ from the candidate set $\mK$ which maximizes $u_f(\cdot,\cdot)$ (Line 1). It then forms the manipulation strategy (response set) that returns minimum leader's reward $\min_{b\in\mathcal{B}}\mu_l(a,b)$ for $a\neq a^\prime$, and maximum leader's reward $\mu_l(a^\prime,b^\prime)$ when plays $a^\prime$ (Line 2). Here with a bit abuse of notation, $\twr(a)$ stands for the follower's \textit{worst response} that induces the lowest \textit{leader}'s reward 
\[
\text{wr}(a) = \argmin_{b\in\mathcal{B}} \mu_{l}(a,b).
\]
Next (Line 3), it verifies if the current $\{\mF, (a^\prime, b^\prime)\}$ is a qualified manipulation. If it is, then $\mF$ is the optimal manipulation strategy since the associated manipulation pair leads to maximum reward for the follower. Hence, it exits the loop and returns the final solution $\mF_{opt} = \mF$. Otherwise, the algorithm eliminates $(a^\prime, b^\prime)$ from $\mK$ and repeats the process (Lines 4-5). 

Under the follower's best manipulation strategy, the leader's problem reduces to a stochastic bandit problem, where the reward for each action $a\in\mA$ is $\mu_l(a, \mF_{opt}(a))$, and the suboptimality gap is
\[
    \Delta_3 = \min_{a\neq a_{fm}} \mu_l(a_{fm},b_{fm}) - \mu_l(a, \twr(a)).
\]
\begin{proposition}\label{prop: average follower's reward}
In a repeated general-sum Stackelberg game with a unique Stackelberg equilibrium, if the leader uses a no-regret learning algorithm $\mathcal{C}$ and the follower uses the best manipulation $\mathcal{F}_{opt}$, 
\[
\begin{aligned}
    &\frac{1}{T}\sum\nolimits_{t=1}^T \mu_f \left(a_t,\mathcal{F}_{opt}(a_t)\right) \\
    &= \frac{1}{T}\sum\nolimits_{t=1}^T \mu_f \left(\bar{a}_t,\mathcal{F}_{br}(\bar{a}_t)\right) + \text{Gap}(a_{fm},a_{se}) + \delta(T),
\end{aligned}
\]
where $\delta(T)\to 0$ as $T \to \infty$, $\left\{a_t\right\}_{t=1}^T$ is generated by Algorithm $\mathcal{C}$ and $\mF_{opt}$, and $\left\{\bar{a}_t\right\}_{t=1}^T$ is generated by Algorithm $\mathcal{C}$ and $\mF_{br}$. 
\end{proposition}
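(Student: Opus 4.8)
The plan is to reduce each of the two runs to a single-agent stochastic bandit problem for the leader and then exploit the no-regret guarantee of $\mathcal{C}$ to show that, in both cases, the leader plays its optimal arm on a fraction of rounds tending to one; the follower's average reward is then pinned to its value at that arm up to a vanishing correction, and the difference of the two pinned values is exactly the gap.

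First I would fix the follower's response function, which is legitimate here since an omniscient follower commits to a static $\mathcal{F}$. Under $\mathcal{F}_{opt}$ the leader faces a stochastic bandit with mean reward $\mu_l(a,\mathcal{F}_{opt}(a))$ for each arm $a$; since $\mathcal{F}_{opt}(a)=\twr(a)$ for $a\neq a_{fm}$, its unique optimal arm is $a_{fm}$ with suboptimality gap $\Delta_3$, exactly as noted before the statement. Symmetrically, under $\mathcal{F}_{br}$ the leader faces a bandit with mean reward $\mu_l(a,\mathcal{F}_{br}(a))=\mu_l(a,\tbr(a))$, whose unique optimal arm is $a_{se}$ by the definition of the Stackelberg equilibrium, with gap $\Delta_2$. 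Because $\mathcal{C}$ is no-regret, its regret $R(T)=\sum_t[\mu_l(a_{fm},b_{fm})-\mu_l(a_t,\mathcal{F}_{opt}(a_t))]$ is sublinear, i.e. $R(T)/T\to 0$. Each round with $a_t\neq a_{fm}$ contributes at least $\Delta_3>0$ to this sum, so the number of such rounds is at most $R(T)/\Delta_3=o(T)$, and the fraction of optimal-arm plays tends to one; the same argument with gap $\Delta_2$ applies to the $\mathcal{F}_{br}$ run.

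Next I would transfer the conclusion from the leader's reward to the follower's. Because $\mu_f(a_t,\mathcal{F}_{opt}(a_t))=\mu_f(a_{fm},b_{fm})$ exactly whenever $a_t=a_{fm}$, and all rewards lie in $[0,1]$, the averaged follower reward differs from $\mu_f(a_{fm},b_{fm})$ by at most the fraction of suboptimal rounds, which is $\mathcal{O}(R(T)/(\Delta_3 T))\to 0$; call this deviation $\varepsilon_1(T)$. Symmetrically, the $\mathcal{F}_{br}$ average differs from $\mu_f(a_{se},b_{se})$ by some $\varepsilon_2(T)\to 0$. Subtracting the two expressions, the constant part is precisely $\mu_f(a_{fm},b_{fm})-\mu_f(a_{se},b_{se})=\text{Gap}(a_{fm},a_{se})$ as defined in Proposition~\ref{prop:best manipulation gap}, and setting $\delta(T)=\varepsilon_1(T)-\varepsilon_2(T)$, which vanishes, yields the claimed identity.

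The only genuinely delicate point is the bridge between the two reward functions: the no-regret property controls the \emph{leader's} reward, whereas the statement concerns the \emph{follower's}. The bridge is that optimal leader play deterministically fixes the follower's per-round reward, so the positive gaps $\Delta_2,\Delta_3$ convert ``sublinear leader regret'' into ``vanishing fraction of rounds on which the follower's reward can deviate from its equilibrium value,'' and boundedness of $\mu_f$ absorbs those rounds into $\delta(T)$. I would also remark that if the no-regret guarantee for $\mathcal{C}$ holds only in expectation or with high probability, the same conclusion holds in the corresponding sense, since every inequality above is preserved under expectation.
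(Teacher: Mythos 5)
Your proposal is correct and takes essentially the same route as the paper: the paper gives no formal proof of this proposition in the appendix, relying instead on the informal argument in the surrounding text that a fixed response function reduces the leader's problem to a stochastic bandit whose optimal arm is $a_{fm}$ (resp.\ $a_{se}$), which no-regret learning eventually plays. Your write-up correctly supplies the details the paper leaves implicit, in particular the bridge from sublinear \emph{leader} regret to a vanishing fraction of suboptimal rounds via the strictly positive gaps $\Delta_3$ and $\Delta_2$, and then to the \emph{follower's} average reward via boundedness of $\mu_f$.
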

When the follower keeps using the best manipulation strategy, the game will eventually converge to the best manipulation pair $(a_{fm},b_{fm})$. Similarly, if the follower uses the best response strategy, the game will eventually converge to the Stackelberg equilibrium $(a_{se},b_{se})$. According to Propositions~\ref{prop:best manipulation gap} and \ref{prop: average follower's reward}, the best manipulation strategy gains an extra average reward $\text{Gap}(a_{fm},a_{se})$ compared to that of the best response strategy.

\section{Manipulative follower with noisy side information}\label{sec:farsighted_bandit}



\subsection{Follower's manipulation strategy}
\setlength{\textfloatsep}{1mm}
\begin{algorithm}[ht]
\caption{FMUCB($t$): Follower's manipulation by UCB at round $t$}
\label{alg:UCB-FM}
\renewcommand{\algorithmicrequire}{\textbf{Input:}}
\renewcommand{\algorithmicensure}{\textbf{Output:}}
\begin{algorithmic}[1]
\REQUIRE Candidate set $\mathcal{K}=\mathcal{A} \times \mathcal{B}$, $\tU_{l,t}(\cdot,\cdot)$, $\tU_{f,t}(\cdot,\cdot)$, $\hmu_{l,t}(\cdot,\cdot)$.
 \STATE Candidate manipulation pair $(a^\prime,b^\prime)=\argmax_{(a,b)\in\mathcal{K}} \tU_{f,t}(a,b)$
 \STATE $\mathcal{F}=\{\mathcal{F}(a^\prime)=b^\prime\}\cup\{\mathcal{F}(a)=\argmin_{b\in\mathcal{B}} \tU_{l,t}(a,b): a\neq a^\prime\}$
 
 \IF{$\max_{a\neq a^\prime} \tU_{l,t}(a,\mathcal{F}
 (a))\geq\hmu_{l,t}(a^\prime,b^\prime)$}
 \STATE Eliminate $(a^\prime, b^\prime)$ from candidate set:  $\mathcal{K}\leftarrow \mathcal{K} \backslash (a^\prime,b^\prime)$ 
 \STATE Return to Line 1
\ENDIF
\ENSURE The response function $\mathcal{F}$
\end{algorithmic}
\end{algorithm}

We now study the more intricate setting of learning the best manipulation strategy with noisy side information. 
We first present the follower manipulation algorithm FMUCB in Algorithm~\ref{alg:UCB-FM}, a variant of FBM (Algorithm~\ref{alg:FBM}) that uses UCB to learn the best follower manipulation strategy with noisy side information. We can see that Algorithm~\ref{alg:UCB-FM} largely resembles Algorithm~\ref{alg:FBM}, and hence we omit the detailed description. Except for the inherited idea from FBM to manipulate the learning of the leader, another key intuition of FMUCB is to design appropriate UCB terms in place of the true reward terms $\mu_l(a,b)$ and $\mu_f(a,b)$ in FBM, that balances the trade-off between \textit{exploration} and \textit{manipulation}. 

In Line 1, the goal is to find the candidate manipulation pair that maximizes $u_f(\cdot,\cdot)$. Therefore, we maximize the upper confidence bound $\text{U}_{f,t}(a,b)$:
\begin{equation}
\text{U}_{f,t}(a,b)= \hmu_{f,t}(a,b)+\sqrt{\frac{2 \log (ABT/\delta)}{n_t(a,b)}}.  
\end{equation}
The suboptimal manipulation pair will not be selected after enough exploration, given the existence of the following suboptimality gap: for any action pair $(a_k, b_k)\in \mA \times \mB$, satisfying $\mu_f(a_k, b_k)<\mu_f(a_{fm},b_{fm})$,
\[ 
\Delta_4 = \min_{(a_k, b_k)} \mu_f(a_{fm},b_{fm}) -\mu_f(a_k,b_k). 
\]
Conversely, in Line 2, the goal is to find the ``worst response'' $\twr(a) = \arg \min_{b\in\mB}\mu_l(a,b)$ for each action $a\neq a_{fm}$. Therefore, we define the term $\text{U}_{l,t}(a,b)$ to be the lower confidence bound: 
\begin{equation}
\text{U}_{l,t}(a,b)= \hmu_{l,t}(a,b)-\sqrt{\frac{2 \log (ABT/\delta)}{n_t(a,b)}}.
\end{equation}
For each action $a$, the suboptimality gap against $\twr(a)$ is defined as 
$\Delta_{5,a} = \min_{b\neq \twr(a)} \mu_l(a, b) - \mu_l(a, \twr(a))$, and the minimum suboptimality gap for all $a\in\mA$ is 
\[\Delta_5 = \min_{a\in\mA} \Delta_{5, a}.\]
Given $\Delta_5$, the suboptimal $\twr(a)$ will not be selected after enough exploration. 

When verifying whether the current $\{\mF, (a^\prime, b^\prime)\}$ is a qualified manipulation (Line 3), we also use $U_{l,t}(a,\mathcal{F}(a))$ as it lower bounds $\mu_{l}(a,\mathcal{F}(a))$. This guarantees that the true best manipulation solution satisfies the constraint since the following holds with high probability: for any action $a\neq a_{fm}$, 
\[ 
U_{l,t}(a,\twr(a)) \leq \mu_l(a,\twr(a)) < \mu_l(a_{fm},b_{fm}). 
\] 
This eliminates the unfeasible solution since $U_{l,t}(a,\twr(a))$ approaches $\mu_{l}(a,\twr(a))$, and the following suboptimality gap exists: for any action pair $(a_k, b_k)\in \mA \times \mB$ satisfying $\mu_f(a_{fm},b_{fm}) < \mu_f(a_k, b_k)$,  
\[ 
\Delta_6 = \min_{(a_k, b_k)} \max_{a\neq a_k} \mu_l(a,\twr(a)) - \mu_l(a_{k},b_{k}). 
\] 
Hence, Algorithm~\ref{alg:UCB-FM} can find the follower's best manipulation with noisy side information.

\begin{figure*}[htb]
\vspace{-3mm}
  \centering
  \begin{minipage}[t]{0.32\linewidth}
    \centering
    \includegraphics[width=\linewidth]{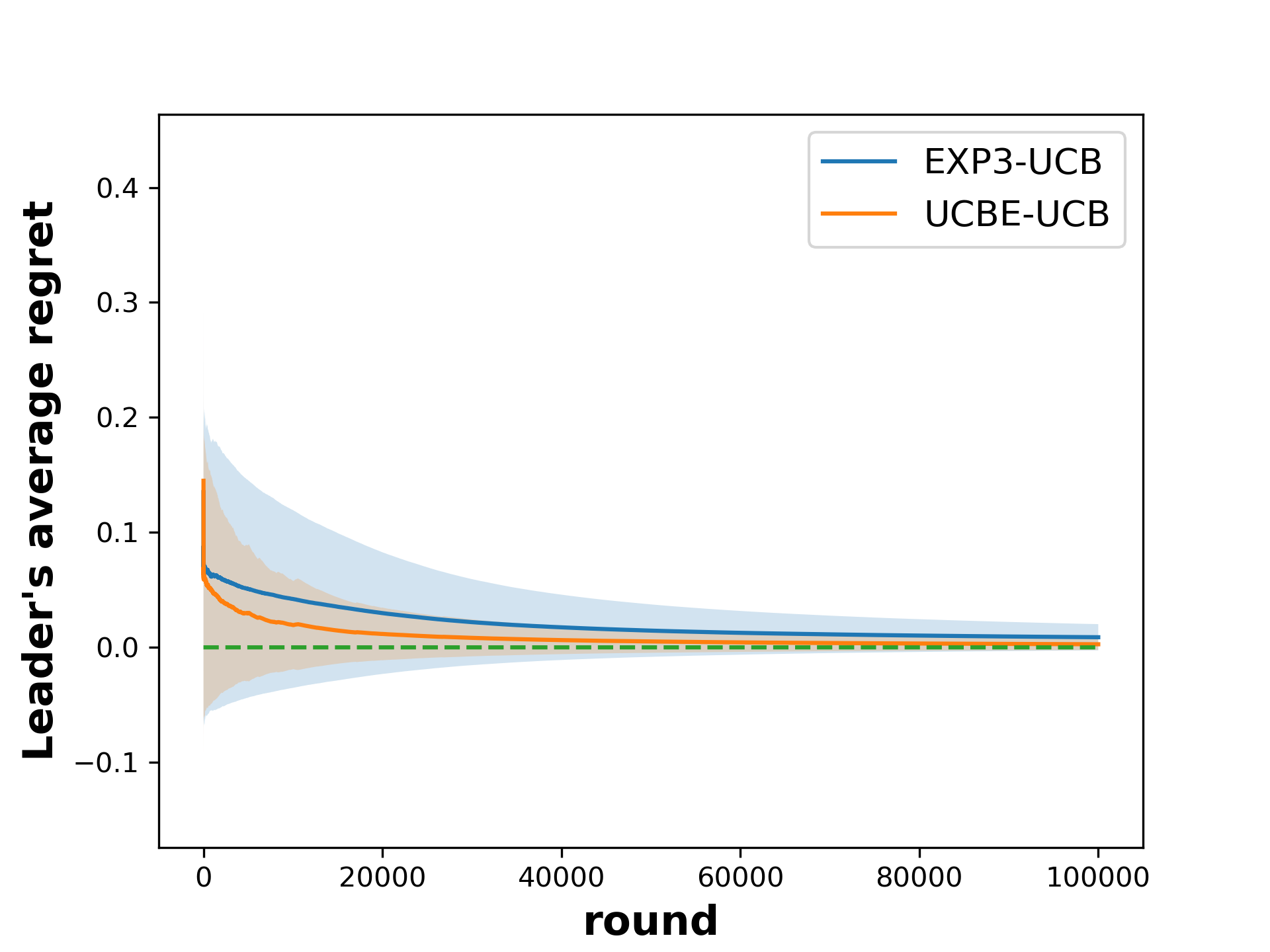}
    \subcaption{Limited information}
    \label{fig:subfig1}
  \end{minipage}
  \begin{minipage}[t]{0.32\linewidth}
    \centering
    \includegraphics[width=\linewidth]{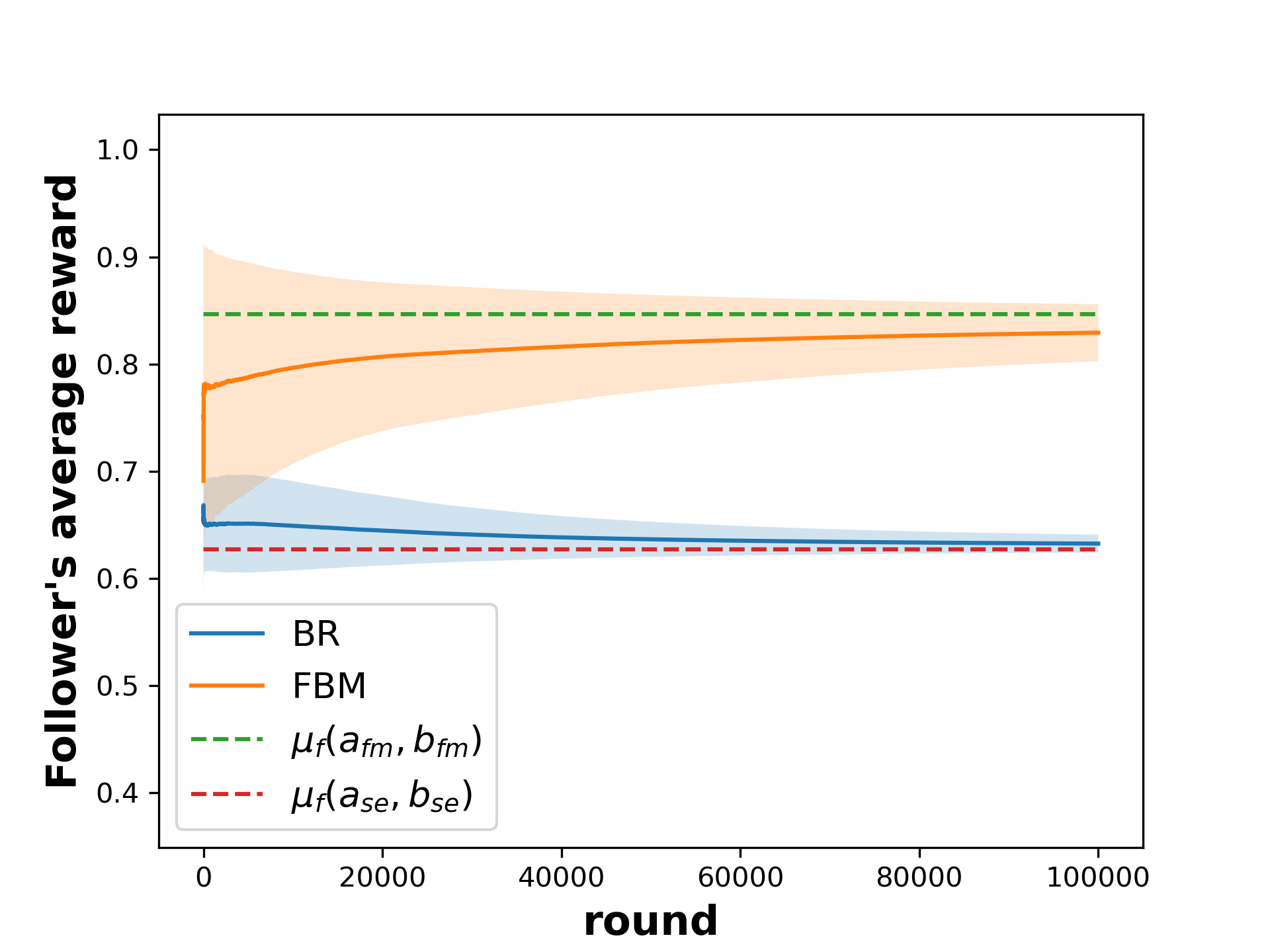}
    \subcaption{Omniscient follower}
    \label{fig:subfig2}
  \end{minipage}
  \begin{minipage}[t]{0.32\linewidth}
    \centering
    \includegraphics[width=\linewidth]{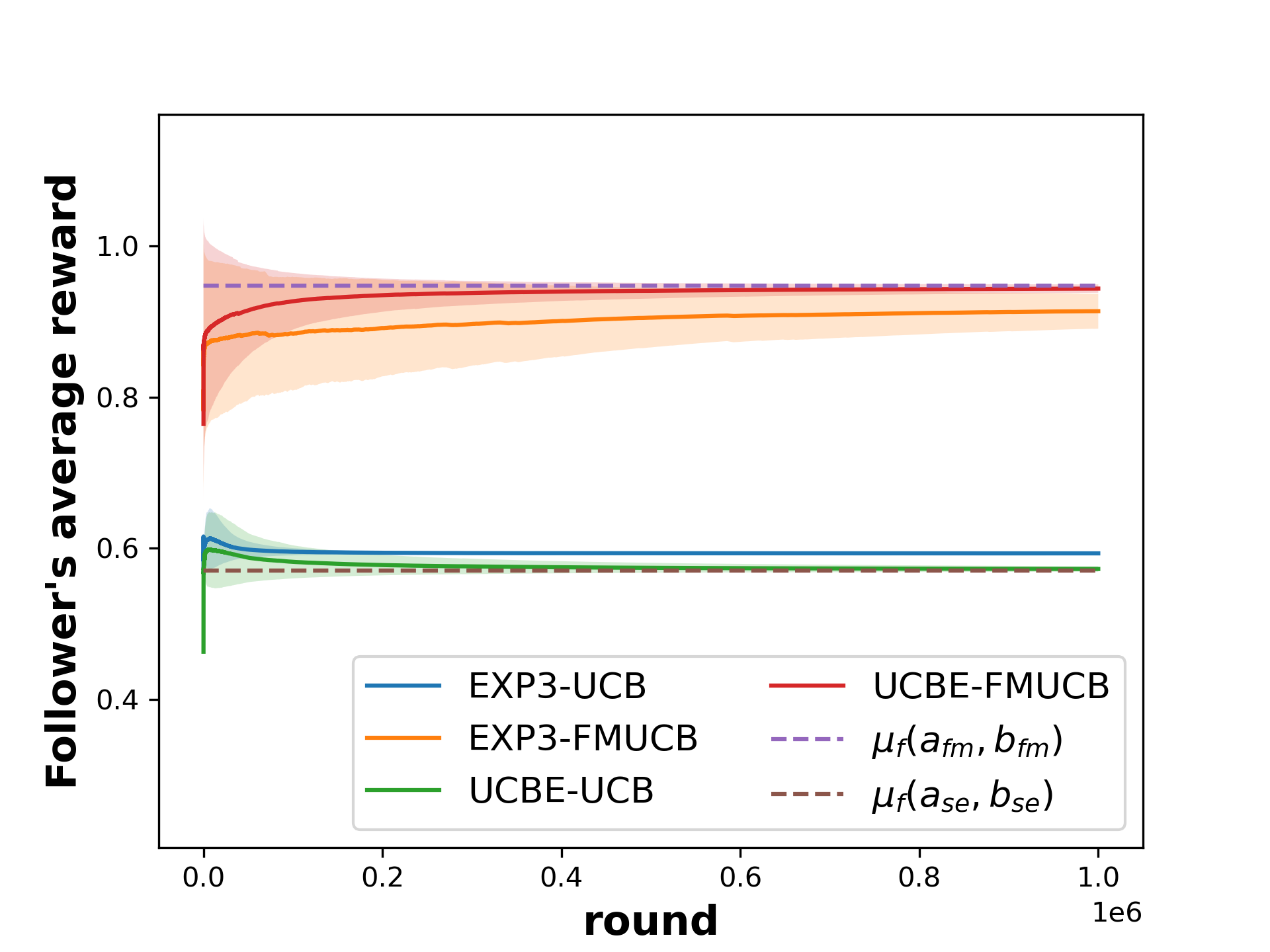}
    \subcaption{Noisy side information}
    \label{fig:subfig3}
  \end{minipage}
  \caption{Experiments for the limited information and side information settings. Each experiment is run 5 times to calculate the mean and std (standard deviation). The shaded area shows $\pm$std.}
  \label{fig:subfig}
  \vspace{-2mm}
\end{figure*}

\subsection{Last iterate convergence analysis}
We also provide theoretical guarantees for the sample efficiency and convergence of this algorithm. As a continuation of Section \ref{sec:myopic_follower}, we analyze the performance of our designed algorithm towards the leader's two kinds of algorithms -- EXP3 and UCBE. 

\begin{theorem}[Last iterate convergence of EXP3-FMUCB with noisy side information]\label{thm:EXP3-FMUCB}
When the leader uses EXP3 with parameter $\alpha = \mO\left(T^{-\frac{1}{3}}\right)$, $\eta = \mO\left(T^{-\frac{1}{3}}\right)$, the follower uses FMUCB, and let $T_{f,w}$ be the total number of rounds that the follower did not play the best manipulation strategy in $T$ rounds, with probability at least $1-3\delta$,
\[
T_{f,w} \leq  \mO\left(\frac{ A^2 B }{\alpha \varepsilon^2} \log \frac{ABT}{\delta} \log\frac{1}{\delta}\right), \text{and}
\]
\[
\begin{aligned}
&\mP\big[a_T \neq a_{fm} \big] \leq \alpha + \\
&A\exp\left( -\Delta_3 T^{\frac{2}{3}} + 2\sqrt{2A \log\frac{2}{\delta}} T^{\frac{1}{3}} + \frac{C_2 A^2 B}{\varepsilon^2}\log \frac{ABT}{\delta} \log\frac{1}{\delta}\right),
\end{aligned}
\]
where $\varepsilon = \min\{\Delta_4, \Delta_5, \Delta_6\}$, $C_2$ is a constant.
\end{theorem}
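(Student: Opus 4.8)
The plan is to prove the two claims in sequence, reusing the last-iterate machinery already developed for EXP3-UCB (Theorem~\ref{thm:exp3-ucb}) and treating the rounds in which the follower mis-identifies its manipulation strategy as a bounded source of ``corruption'' in the leader's weight dynamics. First I would fix the high-probability event $\mathcal{E}$ on which all confidence intervals are simultaneously valid, i.e.\ $|\hmu_{l,t}(a,b)-\mu_l(a,b)|$ and $|\hmu_{f,t}(a,b)-\mu_f(a,b)|$ are bounded by their Hoeffding widths $\sqrt{2\log(ABT/\delta)/n_t(a,b)}$ for every pair and every round; a union bound over the $AB$ pairs and $T$ rounds shows $\mathbb{P}(\mathcal{E})\geq 1-\delta$ and explains the $\log(ABT/\delta)$ factor. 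On $\mathcal{E}$, $U_{f,t}$ upper bounds $\mu_f$ and $U_{l,t}$ lower bounds $\mu_l$, so FMUCB reproduces the algebraic structure of FBM up to the confidence slack.

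For the bound on $T_{f,w}$, I would decompose a ``wrong manipulation'' round into the three ways FMUCB can disagree with FBM: (i) Line~1 selects a candidate pair with $\mu_f<\mu_f(a_{fm},b_{fm})$, controlled by $\Delta_4$; (ii) Line~2 returns a response other than $\twr(a)$ for some $a\neq a_{fm}$, controlled by $\Delta_5$; (iii) the Line~3 qualification test is decided incorrectly, controlled by $\Delta_6$. For each source, a standard UCB argument shows that the offending pair can remain pivotal only until it has been sampled $\mO(\varepsilon^{-2}\log(ABT/\delta))$ times, where $\varepsilon=\min\{\Delta_4,\Delta_5,\Delta_6\}$. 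The key conversion from ``samples'' to ``rounds'' is that a pair $(a,b)$ is sampled only when the leader plays $a$, which under the $\alpha$-uniform exploration happens with probability at least $\alpha/A$ per round; a Chernoff/Azuma concentration bound on the play counts (the source of the extra $\log(1/\delta)$ factor and a second $\delta$ in the failure probability) then shows that accumulating $m$ samples of a fixed pair costs $\mO((A/\alpha)m)$ rounds. Because a suboptimal candidate in Line~1 stays selected --- and hence induces a wrong response function --- for all these rounds, summing over the $\mO(AB)$ pairs yields the claimed $T_{f,w}=\mO\!\big(\tfrac{A^2B}{\alpha\varepsilon^2}\log\tfrac{ABT}{\delta}\log\tfrac1\delta\big)$; the extra factor $A$ relative to the limited-information bound is exactly this ``one sample per $A/\alpha$ rounds'' penalty accumulated over all candidate pairs.

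For the last-iterate claim I would run the EXP3 analysis of Theorem~\ref{thm:exp3-ucb} almost verbatim, but with the follower's target shifted from $\mathcal{F}_{br}$ to $\mathcal{F}_{opt}$ and the Stackelberg action $a_{se}$ replaced by $a_{fm}$. Writing the log-weight difference $y_T(a_{fm})-y_T(a)=\eta\sum_{t=1}^T\big(\tr_t(a_{fm})-\tr_t(a)\big)$, I split the sum into the $T-T_{f,w}$ rounds in which the follower plays $\mathcal{F}_{opt}$ and the $T_{f,w}$ corrupted rounds. On the good rounds the conditional drift is at least $\eta\Delta_3$ per round (since $a_{fm}$ maximizes $\mu_l(\cdot,\mathcal{F}_{opt}(\cdot))$ with gap $\Delta_3$), contributing the $-\Delta_3T^{2/3}$ term once $\eta=\mO(T^{-1/3})$ and $T_{f,w}=\mO(T^{1/3})\ll T$; a Freedman/Azuma bound on the importance-weighted martingale, whose conditional variance is $\sum_t\eta^2\mE[\tr_t(a)^2]\leq\eta^2 T A/\alpha=\mO(AT^{2/3})$, gives the fluctuation term $2\sqrt{2A\log(2/\delta)}\,T^{1/3}$ and a third failure probability $\delta$; and each corrupted round can push the drift the wrong way by at most $\mO(\eta)$, so their total contribution is at most $\eta T_{f,w}=\mO\!\big(\tfrac{A^2B}{\varepsilon^2}\log\tfrac{ABT}{\delta}\log\tfrac1\delta\big)$ after substituting the $T_{f,w}$ bound and using $\eta/\alpha=\mO(1)$. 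Exponentiating, bounding $\mathbb{P}(a_T=a)\leq\tilde x_T(a)$ for $a\neq a_{fm}$, and adding the $\alpha$ exploration floor gives the stated inequality with total failure probability $1-3\delta$.

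The main obstacle is the tight coupling between the follower's response function and the leader's exploration in Part~1: which pairs get sampled is decided by FMUCB, but whether the relevant action is actually played is decided by the leader's EXP3 draw, so the samples-to-rounds conversion must be carried out uniformly over an adaptively chosen sequence of candidate pairs and must survive the elimination loop in Lines~3--5. Getting the $A/\alpha$ factor and the correct powers of $A$ and $B$ requires a careful per-pair accounting together with the concentration bound on play counts; once $T_{f,w}$ is controlled, Part~2 is a bounded-corruption variant of the already-established EXP3-UCB convergence proof, and the only remaining care there is verifying that the corrupted-round drift is genuinely $\mO(\eta)$ per round despite the $1/\tilde x_t(a)\leq A/\alpha$ magnification in the importance weights.
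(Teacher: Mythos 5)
Your proposal is correct and follows essentially the same route as the paper's proof: it decomposes $T_{f,w}$ into the same three failure modes governed by $\Delta_4,\Delta_5,\Delta_6$ (each requiring $\mO(\varepsilon^{-2}\log(ABT/\delta))$ samples per pair), performs the same samples-to-rounds conversion via the $\alpha/A$ exploration floor with a concentration bound on play counts (the paper's $s_m \leq \mO(\tfrac{A}{\alpha}\log\tfrac{1}{\delta})$ inter-play-gap bound), and then reruns the EXP3-UCB last-iterate argument with $a_{se},\Delta_2$ replaced by $a_{fm},\Delta_3$ and the wrong-manipulation rounds treated as bounded corruption. The one caveat you flag at the end --- the $1/\tilde{x}_t(a)$ magnification on corrupted rounds --- is resolved exactly as in the paper's Theorem~\ref{thm:exp3-ucb} proof, by splitting $\tr_t - \mu$ into the martingale part $\tr_t - r_t$ (absorbed by Lemma~\ref{lemma:concentration}) and the conditional-mean part $r_t - \mu$, which is bounded by $1$ per corrupted round.
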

\begin{theorem}[Last iterate convergence of UCBE-FMUCB with noisy side information]\label{thm:UCBE-FMUCB}
When the leader uses UCBE with $S_0 = \mO\left( \frac{B}{\varepsilon^3} \log\frac{ABT}{\delta}\right)$ and $\varepsilon = \min\{\Delta_4, \Delta_5, \Delta_6\}$, the follower uses FMUCB, and let $T_{f,w}$ be the total number of rounds that the follower did not play the best manipulation strategy in $T$ rounds, with probability at least $1-3\delta$,
\[
T_{f,w} \leq  \mO\left(\frac{ A B }{ \varepsilon^2} \log \frac{ABT}{\delta}\right),
\]
and for $T\geq  \mO\left( \frac{A S_0}{\Delta_3^2}\right)$, $\mathbb{P}\big[a_T \neq a_{fm} \big]\leq \frac{\delta}{T}$.
\end{theorem}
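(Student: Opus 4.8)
The plan is to run the same two-phase argument as in the proof of Theorem~\ref{thm:ucbe-ucb}, but with the best-manipulation target $(a_{fm},b_{fm})$ and the combined follower gap $\varepsilon=\min\{\Delta_4,\Delta_5,\Delta_6\}$ in place of the best-response target and the single gap $\Delta_1$. First I would fix the high-probability ``clean'' event on which every empirical mean concentrates: for all rounds $t\le T$ and all pairs $(a,b)$, both $|\hmu_{l,t}(a,b)-\mu_l(a,b)|$ and $|\hmu_{f,t}(a,b)-\mu_f(a,b)|$ stay within $\sqrt{2\log(ABT/\delta)/n_t(a,b)}$. By Hoeffding together with a union bound over pairs and rounds this holds with probability at least $1-2\delta$, and I would condition on it throughout; the remaining $\delta$ is spent on the last-iterate tail below, yielding the overall $1-3\delta$.

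To bound $T_{f,w}$ (the first claim), on the clean event I would show that FMUCB (Algorithm~\ref{alg:UCB-FM}) returns the true best manipulation $\mathcal{F}_{opt}$ as soon as every action pair it inspects has been sampled $\mO(\varepsilon^{-2}\log(ABT/\delta))$ times. Concretely, the three decisions FMUCB makes become correct once the confidence half-widths drop below the corresponding gap: $\tU_{f,t}$ ranks $(a_{fm},b_{fm})$ above every pair of smaller follower reward (gap $\Delta_4$); the lower confidence bound $\tU_{l,t}$ singles out $\twr(a)$ for each $a\neq a_{fm}$ (gap $\Delta_5$); and the qualification test in Line~3 accepts the true pair while rejecting every infeasible one (gap $\Delta_6$). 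Since a pair can only be misused while its half-width still exceeds $\varepsilon$, each of the $AB$ pairs contributes at most $\mO(\varepsilon^{-2}\log(ABT/\delta))$ wrong rounds, and summing gives $T_{f,w}\le\mO(AB\varepsilon^{-2}\log(ABT/\delta))$.

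The coupling between the two learners is the crux, and it is handled by the leader's exploration bonus. The follower can only refine its estimate on a pair $(a,b)$ when the leader actually plays $a$, so I must certify that UCBE visits every action often enough for the follower's per-action UCB/LCB subroutine to converge. Here the choice $S_0=\mO(B\varepsilon^{-3}\log(ABT/\delta))$ does double duty: it is large enough that the inflated index $\hmu_{l,t}(a)+\sqrt{S_0/n_t(a)}$ stays optimistic for $\mu_l(a,\mathcal{F}_{opt}(a))$ even though the at most $T_{f,w}$ mis-response rounds contaminate $\hmu_{l,t}(a)$ by up to $T_{f,w}/n_t(a)$, and at the same time it forces every action to be pulled on the order of $S_0/\Delta_3^2$ times before its index can fall below $\mu_l(a_{fm},b_{fm})$, which exceeds the $\mO(B\varepsilon^{-2}\log(ABT/\delta))$ pulls the follower's subroutine requires. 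Verifying that the extra factor of $\varepsilon^{-1}$ in $S_0$ suffices to simultaneously absorb the contamination and guarantee this forced exploration, thereby closing the circular dependence ``leader explores $\Rightarrow$ follower converges $\Rightarrow$ leader's estimates become clean $\Rightarrow$ leader converges,'' is the main obstacle, and it is the step where the argument genuinely departs from the single-agent UCBE analysis.

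Finally, for the last-iterate claim I would observe that once $T\ge\mO(AS_0/\Delta_3^2)$ the first phase is over: the follower has locked onto $\mathcal{F}_{opt}$ and every suboptimal leader action has been exhausted, so the leader faces an effective stochastic bandit with arm means $\mu_l(a,\mathcal{F}_{opt}(a))$ and suboptimality gap $\Delta_3$, exactly the situation governed by Proposition~\ref{prop: average follower's reward}. Replaying the UCBE last-iterate argument of Theorem~\ref{thm:ucbe-ucb} on this clean instance, the probability that any suboptimal index exceeds that of $a_{fm}$ at the specific round $T$ is bounded, via the concentration inequality underlying the clean event at half-width $\sqrt{S_0/n_T(a)}$ with $S_0$ scaling as $\log(ABT/\delta)$, by $\delta/T$; hence $\mP[a_T\neq a_{fm}]\le\delta/T$, completing the proof.
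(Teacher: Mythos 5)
Your proposal follows essentially the same route as the paper's proof: forced exploration from the UCBE bonus ($n(a)<S_0/4$ keeps the index above any achievable reward) guarantees each pair is sampled often enough for the three FMUCB decisions (governed by $\Delta_4,\Delta_5,\Delta_6$) to stabilize, giving the $T_{f,w}$ bound, after which the contaminated empirical means are absorbed by the extra $\varepsilon^{-1}$ factor in $S_0$ and the last-iterate claim is obtained exactly as in Theorem~\ref{thm:ucbe-ucb} with $\Delta_3$ in place of $\Delta_2$. The only blemish is the appeal to Proposition~\ref{prop: average follower's reward} for the effective-bandit reduction, which concerns the follower's average reward rather than the leader's induced arm means, but this does not affect the correctness of the argument.
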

The proofs for Theorems~\ref{thm:EXP3-FMUCB} and~\ref{thm:UCBE-FMUCB} are respectively in Appendix~\ref{proof:exp3fmucb} and Appendix~\ref{proof:ucbefmucb}. The two theorems present the sample complexity for learning the best manipulation strategy using FMUCB with noisy side information. They also guarantee that the game will achieve the last iterative convergence to the best manipulation pair $(a_{fm},b_{fm})$ with the algorithms EXP3-FMUCB and UCBE-FMUCB.

\section{Empirical Results}

To validate the theoretical results, we conduct experiments on a synthetic Stackelberg game. For both players, the number of actions $A\!=\!B\!=\!5$. The players' mean rewards $\mu_{l}(a,b)$ and $\mu_{f}(a,b)$ of each action pair $(a,b)$ are generated independently and identically by sampling from a uniform distribution $\text{U}(0,1)$. 
For any action pair $(a,b)$, the noisy reward in round $t$ is generated from a Bernoulli distribution: $r_{l,t}(a,b)\sim \text{Ber}\left(\mu_{l}(a,b)\right)$ and $r_{f,t}(a,b)\sim \text{Ber}\left(\mu_{f}(a,b)\right)$, respectively. All experiments were run on a MacBook Pro laptop with a 1.4 GHz quad-core Intel Core i5 processor, a 1536 MB Intel Iris Plus Graphics 645, and 8 GB memory.

Our first experiment validates the convergence of no-regret learning algorithms such as EXP3-UCB and UCBE-UCB (Theorems~\ref{thm:exp3-ucb} and~\ref{thm:ucbe-ucb} in Section~\ref{sec:myopic_follower}). Figure~\ref{fig:subfig1} shows the average regret of the leader w.r.t. round $t$. The blue and orange curves are respectively for EXP3-UCB ($\alpha\!=\!0.01$, $\eta\!=\!0.001$) and UCBE-UCB. We can see that the leader achieves no Stackelberg regret learning via both algorithms, implying that the game converges to Stackelberg equilibrium in both cases.

The second experiment validates the intrinsic reward gap between FBM and BR when the follower is omniscient (Proposition~\ref{prop: average follower's reward} in Section~\ref{sec:farsighted_omniscient}). In this experiment, the leader uses EXP3 ($\alpha=0.01$, $\eta=0.001$), and the follower uses BR (blue curve) or FBM (red curve). The result is in Figure~\ref{fig:subfig2}, where the x-axis is the number of rounds $t$, and the y-axis is the average follower reward. It shows that after playing sufficient rounds, the follower's reward does converge, and FBM yields a significant intrinsic advantage (the gap between the two dashed lines) of approximately 0.22 compared to that of BR.

The third experiment validates the intrinsic reward gap between FMUCB and UCB when the follower learns the leader's reward structure via noisy bandit feedback (Theorems~\ref{thm:EXP3-FMUCB} and~\ref{thm:UCBE-FMUCB} in Section~\ref{sec:farsighted_bandit}). The leader uses either EXP3 ($\alpha=0.1$, $\eta=0.001$) or UCBE, and we compare the average follower reward w.r.t. round $t$ when it uses FMUCB v.s. the baseline best response learned by a vanilla UCB. This introduces four curves, EXP3-UCB, EXP3-FMUCB, UCBE-UCB, and UCBE-FMUCB. We can see that no matter whether the leader uses EXP3 or UCBE, FMUCB yields a significant reward advantage of about 0.3 for the follower compared to that of UCB.

\section{Conclusion}

We study decentralized online learning in general-sum Stackelberg games under the \textit{limited information} and \textit{side information} settings. For both settings, we design respective online learning algorithms for both players to learn from noisy bandit feedback, and prove the last iterate convergence as well as derive sample complexity for our designed algorithms. Our theoretical results also show that the follower can indeed gain an advantage by using a manipulative strategy against the best response strategy, a result that complements existing works in offline settings. Our empirical results are consistent with our theoretical findings.

\section{Acknowledgment}
The authors would like to thank Haifeng Xu and Mengxiao Zhang for valuable discussions and feedback.

\bibliography{slsf}

\newpage
\onecolumn

\title{Decentralized Online Learning in General-Sum Stackelberg Games}
\maketitle

\appendix
\section{Proofs for Section 4}
\subsection{Proof for Theorem \ref{thm:non-convergence}}\label{proof:non-convergence}
\begin{table}[ht]
\vspace{-1mm}
\centering
\begin{tabular}{|c|c|c|}
\hline
Leader / Follower   & $b_1$ & $b_2$  \\
\hline
$a_1$  &  (0.95, 0.3)  &  (0.9, 0.2)    \\
\hline
$a_2$  &  (1, 0.8)  &   (0, 0.79)     \\
\hline         	
\end{tabular}
\caption{Payoff matrix of an specific Stackelberg game. }   
\label{tab:non-convengence}
\vspace{-4mm}
\end{table}
We apply the UCB-UCB algorithm to a specific Stackelberg game, as depicted in Table~\ref{tab:non-convengence}, where the Stackelberg equilibrium is identified as $(a_2, b_1)$. The UCB algorithms for the leader and follower are defined respectively as:
\begin{equation*}
\text{ucb}_{l,t}(a) = \hat{\mu}_{l,t}(a) + \sqrt{\frac{2\log(T/\delta)}{n_t(a)}},
\end{equation*}
and
\begin{equation*}
\text{ucb}_{f,t}(a,b) = \hat{\mu}_{f,t}(a,b) + \sqrt{\frac{2\log(T/\delta)}{n_t(a,b)}}.
\end{equation*}
For simplicity in demonstrating the proof, it is assumed that players receive their actual true rewards at each round. Thus, $\text{ucb}_{l,t}(a_1) > 0.9$, and for $a_2$ and $b_1$, $b_2$, the follower's UCB values are calculated as:
\begin{align*}
\text{ucb}_{f,t}(a_2,b_1) &= \mu_f(a_2,b_1) + \sqrt{\frac{2\log(T/\delta)}{n_t(a_2,b_1)}}, \\
\text{ucb}_{f,t}(a_2,b_2) &= \mu_f(a_2,b_2) + \sqrt{\frac{2\log(T/\delta)}{n_t(a_2,b_2)}}.
\end{align*}
Let $n_t(a_2, b_1) = 2k_1 \log(T/\delta) + 1$ and $n_t(a_2, b_2) = 2k_2 \log(T/\delta)$. The UCB mechanism implies that for the follower to prefer $b_1$ over $b_2$, the following inequality must hold:
\begin{equation*}
\mu_{f}(a_2,b_1)+ \sqrt{\frac{2\log(T/\delta)}{n_t(a_2,b_1)-1}} > \mu_{f}(a_2,b_2)+ \sqrt{\frac{2\log(T/\delta)}{n_t(a_2,b_2)}}.
\end{equation*}
Given $\mu_{f}(a_2,b_1) - \mu_{f}(a_2,b_2) = 0.01$, it follows that:
\begin{equation*}
k_2 > \frac{1}{\left(0.01 + \sqrt{\frac{1}{k_1}}\right)^2}.
\end{equation*}
Assuming $k_2 = 10^4$ yields $k_1 > \frac{1}{4}k_2$. If $k_1 + k_2 = \frac{5}{4} \times 10^4$, then $k_1 < 4k_2$. Given $n_t(a_2) = n_t(a_2, b_1) + n_t(a_2, b_2)$ and assuming $n_t(a_2) = \frac{5}{4} \times 10^4 \log(T/\delta)$, for sufficiently large $T$, the leader's UCB for $a_2$ becomes:
\begin{align*}
\text{ucb}_{l,t}(a_2) &= \frac{n_t(a_2, b_1)\mu_{l}(a_2,b_1) + n_t(a_2, b_2)\mu_{l}(a_2,b_2)}{n_t(a_2)} + \sqrt{\frac{2\log(T/\delta)}{n_t(a_2)}} < 0.9,
\end{align*}
which leads to a contradiction. Thus, $n_t(a_2) < \frac{5}{4} \times 10^4 \log(T/\delta) = \mathcal{O}(\log(T/\delta))$, indicating that the leader will play the optimal action no more than $\mathcal{O}(\log(T/\delta))$ times. Consequently, for sufficiently large $T$, the leader is expected to incur regret linear in $T$.

\newpage
\subsection{Proof for Theorem \ref{thm:exp3-ucb}}\label{proof:exp3ucb}

\begin{theorem*}
[Last iterate convergence of EXP3-UCB under limited information]
In a limited information setting of a repeated general-sum Stackelberg game with noisy bandit feedback, applying EXP3-UCB, with $\alpha = \mathcal{O}\left(T^{-\frac{1}{3}}\right)$, $\eta = \mathcal{O}\left(T^{-\frac{1}{3}}\right)$, for $a\neq a_{se}$, with probability at least $1-3\delta$,
\[
\mP\big[a\neq a_{se} \big] \leq \alpha + A\exp\left( -\Delta_2  T^{\frac{2}{3}} + 2\sqrt{2A \log\frac{2}{\delta}} T^{\frac{1}{3}} +  \frac{C_1 A B}{\Delta_1^2}\log \frac{T}{\delta} \log \frac{1}{\delta} \right),
\]
where $C_1$ is a constant, $\Delta_2$ is the leader's suboptimality reward gap to Stackelberg equilibrium: 
\[
    \Delta_2 = \min_{a\neq a_{se}} \mu_l(a_{se},b_{se}) - \mu_l(a,\tbr(a)).
\]
\end{theorem*}

\bigskip
We first prove the following two lemmas which will then be used to prove Theorem~\ref{thm:exp3-ucb}. We denote $ \tr_t(a) = \frac{1}{\Tilde{x}_{t}(a)} r_{l,t}(a, \mF_{t}(a)) \mI\{a_t = a\}$. We denote $r_t(a) = \mu_{l}(a, \mF_{t}(a))$ and $\mu(a) = \mu_{l}(a, \tbr(a))$ in this section when there is no ambiguity.
\begin{lemma}\citep{wu2022multi}\label{lemma:concentration}
For $a\in\mA$, $T>0$, with probability at least $1-\delta$,
\begin{equation*}
     |\sum_{t=1}^T \eta(\tr_t(a)-r_t(a))| < \sqrt{2  \eta^2\frac{A}{\alpha} T \log\frac{2}{\delta}}.
\end{equation*}
\end{lemma}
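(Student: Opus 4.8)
The plan is to recognize the partial sums $S_\tau = \sum_{t=1}^\tau \eta(\tr_t(a) - r_t(a))$ as a martingale and to apply a \emph{variance-aware} martingale concentration inequality rather than a range-based one. First I would fix the filtration $\{\mathcal{H}_{t}\}$ generated by the play history through round $t$ and verify that the importance-weighted estimator is conditionally unbiased. Conditioned on $\mathcal{H}_{t-1}$, the sampling distribution $\tilde{x}_t$ and the follower's response map $\mathcal{F}_t$ are determined, $a_t\sim\tilde{x}_t$, and the noisy reward has mean $\mu_l(a,\mathcal{F}_t(a))=r_t(a)$; hence
\[
\mathbb{E}[\tr_t(a)\mid \mathcal{H}_{t-1}] = \frac{1}{\tilde{x}_t(a)}\,\mathbb{E}\big[r_{l,t}(a,\mathcal{F}_t(a))\,\mathbb{I}\{a_t=a\}\mid \mathcal{H}_{t-1}\big] = r_t(a),
\]
so $Z_t := \eta(\tr_t(a)-r_t(a))$ is a martingale difference sequence. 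The explicit $\alpha/A$ uniform-exploration floor guaranteeing $\tilde{x}_t(a)\ge \alpha/A$ is precisely what keeps this division well-defined, which is why that extra exploration term matters here.

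The second step is to control the per-step conditional variance. Since $\mathbb{I}\{a_t=a\}^2=\mathbb{I}\{a_t=a\}$, $r_{l,t}\in[0,1]$, and $\tilde{x}_t(a)\ge \alpha/A$, variance $\le$ second moment gives
\[
\mathbb{E}[Z_t^2\mid \mathcal{H}_{t-1}] \le \eta^2\,\mathbb{E}[\tr_t(a)^2\mid \mathcal{H}_{t-1}] \le \frac{\eta^2}{\tilde{x}_t(a)} \le \frac{\eta^2 A}{\alpha}.
\]
Summing over $t$ bounds the total conditional variance by $V := \eta^2 (A/\alpha) T$, which is exactly the quantity under the square root in the target inequality. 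I would also record the almost-sure increment bound $|Z_t|\le \eta A/\alpha$ for use in the tail inequality.

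The third step applies a Freedman/Bernstein-type martingale tail bound with variance proxy $V$ to each of the two tails and takes a union bound, which produces the $\log(2/\delta)$ factor. In the regime $\eta,\alpha = \mathcal{O}(T^{-1/3})$ the variance proxy $V$ scales like $A\,T^{2/3}$ while the range $\eta A/\alpha$ stays $\mathcal{O}(A)$, so the variance term dominates and the bound collapses to $\sqrt{2V\log(2/\delta)} = \sqrt{2\eta^2 (A/\alpha) T \log(2/\delta)}$.

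The main obstacle I anticipate is the heavy range of the importance-weighted estimate: $\tr_t(a)$ can be as large as $A/\alpha$, so a naive Azuma--Hoeffding bound driven by the increment range would yield a factor $A/\alpha$ instead of the desired $\sqrt{A/\alpha}$. The crux is therefore to route the concentration through the conditional-variance bound of Step 2 rather than through the range, and to check that the additive range contribution appearing in the Bernstein/Freedman inequality is genuinely lower-order in the stated parameter regime, so that the clean form in the statement is recovered.
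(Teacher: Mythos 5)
Your proposal follows essentially the same route as the paper: form the martingale $S_t=\sum_{\tau\le t}\eta(\tr_\tau(a)-r_\tau(a))$, bound the conditional second moment by $\eta^2 A/\alpha$ using $\tilde{x}_t(a)\ge\alpha/A$, and apply a variance-based martingale tail bound with proxy $W=\eta^2(A/\alpha)T$ to get $\sqrt{2W\log(2/\delta)}$. If anything you are more careful than the paper, which invokes ``Azuma's inequality'' directly in a variance-proxy form; your observation that a Freedman/Bernstein bound carries an additive range term of order $\eta A/\alpha$ that must be shown lower-order in the regime $\eta,\alpha=\mathcal{O}(T^{-1/3})$ addresses a point the paper's proof glosses over.
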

\begin{proof}[\textbf{Proof of Lemma~\ref{lemma:concentration}}]
Fix $T>0$ and any action $a$. Let $\Xi_t=\eta (\tr_t(a)-r_t(a))$ and $ S_t=\sum_{\tau=1}^t \Xi_{\tau}$. Since $T$ is fixed, $\mathbb{E}[S_t]$ is bounded. Moreover, we have
\begin{align*}
    \mathbb{E}[S_t|S_{t-1},\cdots,S_1] &= S_{t-1} + \eta \cdot \mathbb{E}[\tr_t(a)-r_t(a)|\mG_{t-1}]=S_{t-1}.
\end{align*}
By definition, $\{S_t\}_{t=1}^T$ is a martingale. Apply Azuma’s inequality to $\{S_t\}$, for any $x>0, 1\leq t \leq T$, we have  
\begin{equation}\label{eq:Benn}
    \mathbb{P}[|S_t| \geq x]\leq 2\exp{\Big(-\frac{x^2}{2W}\Big)}.
\end{equation}
$W$ is an upper bound of $\sum_{\tau=1}^t \mathbb{E}[\Xi_{\tau}^2|\mG_{\tau-1}]$. Note that
\begin{align*}
\notag
\mathbb{E}[\Xi_t^2|\mathcal{G}_{t-1}]&= \eta^2\mathbb{E}\Big[\big(0-r_t(a)\big)^2 \cdot (1-x_{t}(a)) + \Big(\tr_t(a)-r_t(a)\Big)^2 \cdot x_{t}(a)\Big|\mathcal{G}_{t-1}\Big] \\  \notag
&=\eta^2\mathbb{E}\Big[r_{l,t}^2(a, \mF_{t}(a))\cdot\frac{1}{x_t(a)}-r_t^2(a)\Big|\mathcal{G}_{t-1}\Big] \\ \label{eq:xisquareupper}
& \leq \frac{A}{\alpha}\eta^2.
\end{align*}
We take $t=T$ and $W=\frac{A}{\alpha}\eta^2 T$ in Eq.\eqref{eq:Benn} and obtain
\begin{equation*}\label{eq:solvex}
    x\geq \sqrt{2W\log \frac{2}{\delta}},
\end{equation*}
which implies with probability at least $1-\delta$, 
$$|\sum_{t=1}^T \eta(\tr_t(a)-r_t(a))| < \sqrt{2  \eta^2\frac{A}{\alpha} T \log\frac{2}{\delta}}.$$
\end{proof}

\begin{lemma}\label{lemma: bound for wbr and wwr}
We set $\delta=T^{-\beta}$ in $\text{ucb}_{f,t}(\cdot,\cdot)$, and $\beta\geq 3$. Denote $T_{wbr}(a)$ as the number of rounds that the follower did not play best response when leader played $a$, with probability at least $1-\delta$, for all $a\in\mA$, we have
\begin{equation*}
    T_{wbr}(a)\leq \mO\left(B\frac{\log(T/\delta)}{\Delta_1^2}\right).
\end{equation*}
Similarly, we denote the number of wrong recognition rounds of $\twr(a)$ in Algorithm~\ref{alg:UCB-FM} for every action $a\in\mA$ as $T_{wwr}(a)$, with probability at least $1-\delta$, for all $a\in\mA$, we have
\begin{equation*}
    T_{wwr}(a)\leq \mO\left(B\frac{\log(ABT/\delta)}{\Delta_1^2} \right).
\end{equation*}
\end{lemma}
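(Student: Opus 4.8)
The plan is to treat this as a standard UCB suboptimal-arm counting argument, applied twice under a common template: once to the follower's best-response search (giving $T_{wbr}$) and once to the worst-response recognition inside FMUCB (giving $T_{wwr}$). In both cases the structure is identical -- a high-probability concentration event followed by a gap-based cap on how often a suboptimal response can be selected -- so I would prove one generic bound and instantiate it with the appropriate reward channel and separation gap.

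First I would establish the good event. For a fixed pair $(a,b)$ whose empirical mean is formed from $n$ samples, Hoeffding's inequality gives $\mP[\,|\hmu_{f,n}(a,b) - \mu_f(a,b)| \geq \sqrt{2\log(T/\delta)/n}\,] \leq 2(T/\delta)^{-4}$, since rewards lie in $[0,1]$ and the chosen radius makes the Hoeffding exponent equal $4\log(T/\delta)$. Because $n_t(a,b)$ is itself random, I would take an \emph{anytime} union bound over all sample counts $n\in\{1,\dots,T\}$ and over all $AB$ pairs, so the aggregate failure probability is at most $2AB\,T\,(T/\delta)^{-4}$. Substituting $\delta=T^{-\beta}$ with $\beta\geq 3$ drives this below $\delta$ for large $T$, so with probability $1-\delta$ every empirical mean stays within its confidence radius $\sqrt{2\log(T/\delta)/n_t(a,b)}$ simultaneously across all pairs and all rounds. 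This uniform control is the real crux: the bonus must dominate the estimation error at \emph{every} round, not merely in expectation.

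On this good event I would run the usual UCB argument. If the follower selects a suboptimal $b\neq\mFb(a)$ in a round where the leader plays $a$, then $\tucb_{f,t}(a,b)\geq\tucb_{f,t}(a,\mFb(a))\geq\mu_f(a,\mFb(a))$, while the left side is at most $\mu_f(a,b)+2\sqrt{2\log(T/\delta)/n_t(a,b)}$; rearranging and using $\mu_f(a,\mFb(a))-\mu_f(a,b)\geq\Delta_1$ forces $n_t(a,b)\leq 8\log(T/\delta)/\Delta_1^2$. Summing this cap over the at most $B-1$ suboptimal responses gives $T_{wbr}(a)=\sum_{b\neq\mFb(a)}n_T(a,b)\leq\mO\!\left(B\log(T/\delta)/\Delta_1^2\right)$, the first claim.

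For $T_{wwr}(a)$ the derivation is the mirror image: FMUCB identifies the worst response by minimizing the leader's lower confidence bound $\tU_{l,t}(a,b)$, so I would repeat the argument with $\mu_l$ replacing $\mu_f$, the lower bound replacing the upper bound, the wider radius $\sqrt{2\log(ABT/\delta)/n_t(a,b)}$, and the worst-response separation $\Delta_5=\min_{a}\min_{b\neq\twr(a)}\mu_l(a,b)-\mu_l(a,\twr(a))$ playing the role of $\Delta_1$, which yields the stated $\mO(B\log(ABT/\delta)/\Delta_1^2)$ form after summing over the $B-1$ non-worst responses. The main obstacle I anticipate is not any single inequality but the uniformity: since the visit counts $n_t(a,b)$ are driven jointly by the leader's changing policy and the follower's responses, the gap-based cap must hold for every round at once, which is precisely what the anytime union bound and the choice $\beta\geq 3$ secure. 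A secondary subtlety is that this cap bounds visits to the \emph{pair} $(a,b)$, so the per-action count $T_{wbr}(a)$ emerges only after summing over $b$, independently of how often the leader actually draws $a$.
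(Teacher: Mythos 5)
Your proposal is correct and follows essentially the same route as the paper, whose entire proof is a one-line appeal to the standard UCB suboptimal-arm-count analysis (Theorem 10.14 of Orabona plus a concentration inequality) that you spell out in full: a uniform high-probability confidence event followed by the gap-based cap $n_t(a,b)\leq \mO\left(\log(T/\delta)/\Delta^2\right)$ summed over the $B-1$ suboptimal responses for each $a$. Your observation that the $T_{wwr}(a)$ bound is really governed by the worst-response gap $\Delta_5$ rather than $\Delta_1$ is a fair and in fact more careful reading than the paper's own statement.
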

\begin{proof}
Combining the Theorem 10.14 of \citet{orabona2019modern} and Bernstein inequality we can get the results.
\end{proof}

\begin{proof}[\textbf{Proof of Theorem~\ref{thm:exp3-ucb}}]
We denote $\mu(a) = \mu_{l}(a, \tbr(a))$ in this section when there is no ambiguity. For $a\neq a_{se}$, based on E.4 of \citet{yu2022learning}, we have
\begin{equation}\label{eq:bound for not best response}
\begin{aligned}
\sum_{t=1}^T r_t(a) - r_t(a_{se}) 
& = \sum_{t=1}^T \Big(r_t(a) - \mu(a)\Big) + \Big(\mu(a) - \mu(a_{se})\Big) + \Big(\mu(a_{se}) - r_t(a_{se})\Big) \\
& \leq \sum_{t=1}^T \Big(r_t(a) - \mu(a)\Big) - \Delta_2 T + \sum_{t=1}^T \Big(\mu(a_{se})- r_t(a_{se})\Big) \\
& \leq 4 s_{m} \sum_{a\in\mA} T_{wbr}(a) - \Delta_2 T,
\end{aligned}
\end{equation}

where $s_m$ represents the biggest interval the leader chooses action $a$ between the $i$-th time and the $(i+1)$-th time for any $i\in [n(a)-1]$. With probability at least $1-\delta$, we have
\[
s_{m} \leq \mO\left(\frac{A}{\alpha} \log \frac{1}{\delta}\right).
\]

We denote $y_t(a) = \sum_{\tau=1}^t \eta \tr_t(a)$. For $a\neq a_{se}$, combined with~\ref{lemma:concentration} and Eq.\eqref{eq:bound for not best response}, we have
\begin{equation*}
\begin{aligned}
y_T(a) - y_T(a_{se})
&= \sum_{t=1}^T \eta \left[\tr_t(a) - \tr_t(a_{se})\right] \\ 
&= \sum_{t=1}^T \eta \left[\tr_t(a) - r_t(a))\right] + \sum_{t=1}^T \eta \left[r_t(a) - r_t(a_{se})\right] +\sum_{t=1}^T \eta \left[r_t(a) - \tr_t(a_{se})\right] \\
& \leq 2\sqrt{2  \eta^2\frac{A}{\alpha} T \log\frac{2}{\delta}} + 4 \eta s_{m} \sum_{a\in\mA} T_{wbr}(a) - \Delta_2 \eta T.
\end{aligned}
\end{equation*}

For any $a\neq a_{se}$, we set $\alpha = \mO\left(T^{-\frac{1}{3}}\right)$, $\eta = \mO\left(T^{-\frac{1}{3}}\right)$, by the union bound, with probability at least $1-3\delta$,
\begin{equation}\label{eq:bound for probability}
\begin{aligned}
x_{T+1}(a) 
&= \frac{\exp (y_T(a))}{\sum_{a^\prime \in \mA} \exp(y_T(a^\prime))} 
\leq \frac{\exp (y_T(a))}{\exp (y_T(a_{se}))}  \\
&\leq \exp\left( -\Delta_2 T^{\frac{2}{3}} + 2\sqrt{2A \log\frac{2}{\delta}} T^{\frac{1}{3}} +  \frac{C_1 A B}{\Delta_1^2}\log \frac{T}{\delta} \log \frac{1}{\delta} \right) 
\end{aligned}
\end{equation}
Note that
\begin{equation*}
    \mP\big[a_{T}\neq a_{se}\big] = \sum_{a\neq a_{se}} \tx_{T}(a) \leq  \alpha + \sum_{a\neq a_{se}} x_T(a).
\end{equation*}
So, combined with Eq.\eqref{eq:bound for probability}, we have reached our conclusion. 

\end{proof}

\textbf{Remark:} 
Last-iterate convergence refers to the phenomenon where the sequence of actions converges to the optimal action or the sequence of policies converges to the optimal policy, formulated as $\lim_{t \to \infty} a_t = a_\star$. It represents a particularly strong form of convergence, which is inevitably a stronger notion than the average-iterate convergence~\citep{wu2022multi, abe2022last, cai2024uncoupled}: an algorithm demonstrating last-iterate convergence is proven to also be capable of achieving average-iterate convergence whereas the reverse is not necessarily true. Our findings corroborate this view, indicating that $\lim_{t \to \infty} P(a_t \neq a_\star)=0$ leads to $\lim_{t \to \infty} a_t = a_\star$.

The formulations of the bounds presented in both Theorem 2 and Theorem 3 are conventional within the learning theory community, attributable to divergent analytical approaches applied to EXP3 and UCB type algorithms. 
Generally speaking, Theorem 2 introduces a decay rate of $\mathcal{O}(T^{-1/3} + \exp(-T^{2/3}))$, contrasting with the linear decay in $T$ observed in Theorem 3. This distinction suggests a comparative advantage for Theorem 3, contingent upon a specific constraint on $T$. Nevertheless, given that both bounds are influenced by the gaps, denoted as $\Delta_1, \Delta_2$, a thorough comparison necessitates consideration of these gap values.

\newpage
\subsection{Proof for Theorem \ref{thm:ucbe-ucb}}\label{proof:ucbeucb}
\begin{theorem*}[Last iterate convergence of UCBE-UCB  under limited information]
In a limited information setting of a repeated general-sum Stackelberg game with noisy bandit feedback, applying UCBE-UCB with $S_0 = \mathcal{O}\left(\frac{B}{\epsilon^3}\log\frac{ABT}{\delta}\right)$, $\varepsilon = \min\{\Delta_1,\Delta_2\}$, for $T \geq \mathcal{O}\left(\frac{A S_0}{\Delta_1^2}\right)$, with probability at least $1-3\delta$, we have:
\[
\mathbb{P}\big(a_T \neq a_{se}\big) \leq \frac{\delta}{T}.
\]
\end{theorem*}

\bigskip \bigskip
\begin{lemma}\label{lemma:ucb bound}
Let $\tU_l(a,b)=\hmu(a,b) + \sqrt{\frac{S_1}{n_t(a,b)}}$, $n(a,b)\geq 1$, set $S_1 = 2 \log \frac{ABT}{\delta}$, when $n(a,b)\geq  \mO\left(\frac{\log (ABT/\delta)}{\varepsilon^2}\right)$, with probability at least $1-\frac{\delta}{ABT}$, we have
\[
U_l(a,b) \in \left[ \mu(a,b)-\frac{\varepsilon}{4}, \mu(a,b)+\frac{\varepsilon}{4} \right].
\]
A similar result holds for the follower. 
\end{lemma}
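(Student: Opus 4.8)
The plan is to treat this as a standard concentration statement, calibrated so that the additive bonus $\sqrt{S_1/n_t(a,b)}$ exactly dominates the sampling error of $\hmu(a,b)$. First I would fix the pair $(a,b)$ and view $\hmu(a,b)$ as the empirical average of $n_t(a,b)$ rewards taking values in $[0,1]$ with mean $\mu(a,b)$. For any \emph{fixed} number of samples $m$, Hoeffding's inequality gives $\mP\big[\,|\hmu(a,b)-\mu(a,b)| > \sqrt{S_1/m}\,\big] \leq 2\exp(-2 S_1)$, and the choice $S_1 = 2\log\frac{ABT}{\delta}$ makes this at most $2\left(\frac{\delta}{ABT}\right)^{4}$, with ample room to spare on the constant.

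The main obstacle is that $n_t(a,b)$ is not a fixed count but a random, history-dependent quantity, so plain Hoeffding does not apply verbatim. I would resolve this exactly as in Lemma~\ref{lemma:concentration}: either run a union bound over all realizable sample counts $m \in \{1,\dots,T\}$, or write the partial sums $\sum_\tau \big(r_{f,\tau}-\mu(a,b)\big)$ as a martingale and invoke Azuma's inequality. Either route costs only an extra $\log T$ factor, which is already absorbed into $S_1$; after a further union bound over the $AB$ action pairs, the total failure probability is at most $\frac{\delta}{ABT}$. I will denote by $\mathcal{E}$ the resulting event $\big\{\,|\hmu(a,b)-\mu(a,b)| \leq \sqrt{S_1/n_t(a,b)}\,\big\}$, which holds with probability at least $1-\frac{\delta}{ABT}$.

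On $\mathcal{E}$ the remainder is routine algebra in two directions, where $U_l(a,b)=\hmu(a,b)+\sqrt{S_1/n_t(a,b)}$. For the lower bound, $\hmu(a,b) \geq \mu(a,b) - \sqrt{S_1/n_t(a,b)}$ immediately yields $U_l(a,b) \geq \mu(a,b) \geq \mu(a,b) - \frac{\varepsilon}{4}$, so this side is free; indeed the bonus is chosen precisely to cancel the downward deviation. For the upper bound, $\hmu(a,b)\leq \mu(a,b)+\sqrt{S_1/n_t(a,b)}$ gives $U_l(a,b) \leq \mu(a,b) + 2\sqrt{S_1/n_t(a,b)}$; requiring $2\sqrt{S_1/n_t(a,b)} \leq \frac{\varepsilon}{4}$ is equivalent to $n_t(a,b) \geq 64\,S_1/\varepsilon^2 = \mO\!\left(\frac{\log(ABT/\delta)}{\varepsilon^2}\right)$, which is exactly the stated sample threshold. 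Combining the two directions places $U_l(a,b)$ in $\left[\mu(a,b)-\frac{\varepsilon}{4},\,\mu(a,b)+\frac{\varepsilon}{4}\right]$, and the identical computation with the follower's empirical mean and bonus proves the analogous statement, completing the argument.
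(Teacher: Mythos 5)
Your proof is correct and takes essentially the same route as the paper's: a Hoeffding concentration bound on $\hmu(a,b)$ combined with the observation that the bonus $\sqrt{S_1/n_t(a,b)}$ drops below $\varepsilon/8$ once $n_t(a,b) = \Omega(S_1/\varepsilon^2)$, so the two error contributions add up to at most $\varepsilon/4$. If anything, you are more careful than the paper, which invokes Hoeffding without addressing that $n_t(a,b)$ is a random, history-dependent count; your union bound over realizable sample sizes (affordable because $S_1 = 2\log\frac{ABT}{\delta}$ makes the per-count failure probability $2(\delta/(ABT))^4$) closes that small gap cleanly.
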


\begin{proof}[\textbf{Proof of Lemma~\ref{lemma:ucb bound}}]
(1) When $n(a,b)\geq \mO\left(\frac{\log (ABT/\delta)}{\varepsilon^2}\right)$, by the Hoeffding inequality, with probability at least $1-\frac{\delta}{ABT}$, we have
\[
\hmu(a,b) \in \left[ \mu(a,b)-\frac{\varepsilon}{8}, \mu(a,b)+\frac{\varepsilon}{8} \right].
\]

(2) When $n(a,b)\geq \mO\left(\frac{S_1}{\varepsilon^2}\right)$, 
\[
\sqrt{\frac{S_1}{n_t(a,b)}} \leq \frac{\varepsilon}{8} .
\]

So combining (1) and (2), when $n(a,b)\geq \mO\left(\max\left\{\frac{\log (ABT/\delta)}{\varepsilon^2},  \frac{S_1}{\varepsilon^2}\right\}\right)$, with probability at least $1-\frac{\delta}{ABT}$, we have
\[
U_l(a,b) \in \left[ \mu(a,b)-\frac{\varepsilon}{4}, \mu(a,b)+\frac{\varepsilon}{4} \right].
\]
The bound for the UCB term of the follower is completely analogous and thus omitted.
\end{proof}

\begin{proof}[\textbf{Proof of Theorem~\ref{thm:ucbe-ucb}}]
We next only consider rounds that the leader plays $a\in\mA$. With a little abuse of notation, $t$ is referred to as $t$-th time that the leader plays the action $a$.
We denote $\tucb_l(a) = \hmu(a) + \sqrt{\frac{S_0}{n(a)}}$. We ignore the subscript $t$ when there is no ambiguity.

Noticed that when $n(a) < \frac{S_0}{4}$, $ucb_l(a) > 2$. And when $n(a) \geq S_0$, $ucb_l(a) \leq 2$. So when total round $T$ ($T \geq \mO\left(A S_0\right)$) is big enough, the leader will play action $a$ for at least $\frac{S_0}{4}$ rounds.

We next bound the difference between accumulative reward the leader receives and the accumulative reward under best responses,
\begin{align*}
&\left|\sum_{t=1}^{n(a)} r_{l,t}(a, b_t)-\mu_l(a, \tbr(a))\right| \\
=& \left|\sum_{t=1}^{n(a)} \big(r_{l,t}(a, b_t) - \mu_l(a, \tbr(a))\big)\mI\left\{b_t = \tbr(a)\right\}
+ \sum_{t=1}^{n(a)} \big(r_{l,t}(a, b_t) - \mu_l(a, \tbr(a))\big)\mI\left\{b_t \neq \tbr(a)\right\}\right| \\
\leq & \left|\sum_{t=1}^{n(a)} \big(r_{l,t}\left(a, \tbr\left(a\right)\right) - \mu_l\left(a, \tbr(a)\right)\big)\mI\left\{b_t = \tbr(a)\right\}\right| + T_{wbr}(a).
\end{align*}
And when $n(a)-T_{wbr}(a)\geq\mO\left(\frac{\log (AB/\delta) }{\varepsilon^2}\right)$, by Lemma~\ref{lemma:ucb bound}, with probability at least $1- \frac{\delta}{ABT}$,
\[
\frac{1}{n(a)}\left|\sum_{t=1}^{n(a)} \big(r_{l,t}\left(a, \tbr\left(a\right)\right) - \mu_l\left(a, \tbr(a)\right)\big)\mI\left\{b_t = \tbr(a)\right\}\right| \leq \frac{\varepsilon}{8}.
\]
So when $n(a) \geq T_{wbr}(a) + \mO\left(\frac{\log \frac{AB}{\delta}}{\varepsilon^2}\right)$, and $n(a) \geq \frac{8}{\varepsilon} T_{wbr}(a)$, with probability at least $1- \frac{\delta}{ABT}$, we have
\begin{align*}
    &\frac{1}{n(a)} \left| \sum_{t=1}^{n(a)} r_{l,t}(a, b_t)-\mu_l(a, \tbr(a)) \right|  
\leq  \frac{\varepsilon}{8} + \frac{1}{n(a)} T_{wbr}(a) 
\leq  \frac{\varepsilon}{8} + \frac{\varepsilon}{8} = \frac{\varepsilon}{4}
\end{align*}
We set $S_0\geq \mO\left( \frac{B}{\varepsilon^3} \log\frac{ABT}{\delta}\right)$, which will guarantee $n(a)\geq \mO\left( \frac{B}{\varepsilon^3} \log\frac{ABT}{\delta}\right)$ and the following inequality holds based on above analysis
\[
\left|\hmu_l(a) - \mu_l(a, \tbr(a)) \right| \leq \frac{\varepsilon}{4}.
\]
When $n(a)\geq \mO\left( \frac{S_0}{\varepsilon^2} \right)$, $\sqrt{\frac{S_0}{n(a)}}\leq \frac{\varepsilon}{8}$, then with probability at least $1-\frac{\delta}{T}$, for all $a\in\mA$, we have,
\[
\left|\tucb_l(a) - \mu_l(a, \tbr(a))\right| \leq \frac{3\varepsilon}{8} < \frac{\varepsilon}{2}.
\]
So $\argmax_{a\in\mA}\tucb_l(a) = a_{se}$.
Then by the union bound, with probability at least $1-3\delta$, we have 
\begin{equation*}
    \mP\big[a\neq a_{se}\big]\leq \frac{\delta}{T}.
\end{equation*}
\end{proof}
\newpage
\section{Proof for Section 6}
\subsection{Proof for Theorem~\ref{thm:EXP3-FMUCB}}\label{proof:exp3fmucb}
\begin{theorem*}[Last iterate convergence of EXP3-FMUCB with noisy side information]
When the leader uses EXP3 with parameter $\alpha = \mO\left(T^{-\frac{1}{3}}\right)$, $\eta = \mO\left(T^{-\frac{1}{3}}\right)$, the follower uses FMUCB, and let $T_{f,w}$ be the total number of rounds that the follower did not play the best manipulation strategy in $T$ rounds, with probability at least $1-3\delta$,
\[
T_{f,w} \leq  \mO\left(\frac{ A^2 B }{\alpha \varepsilon^2} \log \frac{ABT}{\delta}\right), \text{and}
\]
\[
\mP\big[a\neq a_{se} \big] \leq \alpha + A\exp\left( -\Delta_3 T^{\frac{2}{3}} + 2\sqrt{2A \log\frac{2}{\delta}} T^{\frac{1}{3}} + \frac{C_2 A^2 B}{\varepsilon^2}\log \frac{ABT}{\delta} \log\frac{1}{\delta}\right),
\]
where $\varepsilon = \min\{\Delta_4, \Delta_5, \Delta_6\}$, $C_2$ is a constant.
\end{theorem*}
\bigskip\bigskip

We study the case when the follower fails to do the manipulation when leader plays $a$. We ignore the subscript $t$ when there is no ambiguity, including the following two cases.
\paragraph{Wrong recognition for $\twr(a)$} We denote the rounds that the follower did not find $\twr(a)$ correct for all $a\in\mA$ as $t_1$. By Lemma~\ref{lemma: bound for wbr and wwr}, we have 
\begin{equation*}
    t_1 \leq \sum_{a\in\mA} T_{wwr}(a) \leq \mO\left(AB\frac{\log(ABT/\delta)}{\Delta_1^2} + AB\log\left(\frac{AB}{\delta}\right)\right).
\end{equation*}
\paragraph{Wrong recognition of manipulation when $\twr(a)$ is correct for all $a\in\mA$}
We next study the wrong manipulation rounds when $\twr(a)$ is correct for all $a\neq a_{se}$. For any action pair $(a,b)\in \mA \times \mB$, we consider whether it will be selected as the best manipulation pair or under what condition it will be eliminated by Algorithm~\ref{alg:UCB-FM} when it is not the best manipulation pair. We classify and discuss two different situations:

(i) $\mu_f(a,b) \geq \mu_f(a_{fm},b_{fm})$, but there exists $\hat{a} = \argmin_{\hat{a} \neq a} \mu_{l}(\hat{a},\text{wr}(\hat{a}))$, $\mu_l(\hat{a},\text{wr}(\hat{a})) \geq \mu_l(a,b)$. As a reminder, $\mu_{l}(\hat{a}, \twr(a))-\mu_{l}(a,b) = \Delta_6 \geq \varepsilon$. When $n(a, b) > t_2 = \mO\left(\frac{\log (ABT / \delta)}{\varepsilon^2}\right)$ and $n(\hat{a}, \twr(\hat{a})) > t_2$,
by Lemma~\ref{lemma:ucb bound}, the following inequality holds
\begin{equation*}
\hmu_{l,t}(a,b) \leq \mu_{l}(a,b)+\frac{\varepsilon}{8} < \mu_{l}(\hat{a}, \twr(a)) - \frac{\varepsilon}{4}
\leq U_{l,t}(\hat{a},\twr(\hat{a})),
\end{equation*}
which means that $(a,b)$ will be eliminated by Algorithm~\ref{alg:UCB-FM} through Line 4. So it needs at most extra $2t_1$ rounds for action pair $(a,b)$ to eliminate the wrong manipulation pair $(a,b)$.

(ii) $\mu_f(a,b) < \mu_f(a_{fm},b_{fm})$.
When $n(a,b)>t_3 = \mO\left(\max\left\{\frac{\log (ABT/\delta)}{\varepsilon^2},  \frac{S_1}{\varepsilon^2}\right\}\right)$, by Lemma~\ref{lemma:ucb bound}, we have
\[
U_l(a,b) \leq \mu(a,b)+\frac{\varepsilon}{4} < \mu(a_{fm},b_{fm}) < U_l(a_{fm},b_{fm}).
\]
So the extra wrong manipulation rounds for $(a,b)$ is no more than $t_3$.




When the leader plays more than $\mO\left(\frac{A \log(1/\delta)}{\alpha} t_0 \right)$ rounds, because of the existence of uniform exploration parameter $\alpha$, with probability at least $1-\delta$, $n(a)\geq t_0$. So by the union bound, with probability at least $1-3\delta$,
\begin{equation*}
    T_{f,w}\leq \mO\left(\frac{A}{\alpha}\log\frac{1}{\delta}\left(t_1 + AB t_2 +AB t_3\right)\right)=\mO\left(\frac{ A^2 B }{\alpha \varepsilon^2} \log \frac{ABT}{\delta} \log\frac{1}{\delta}\right).
\end{equation*}

We denote $\mu_m(a) = \mu_{l}(a, \mF_{fm}(a))$ in this section when there is no ambiguity. For $a\neq a_{se}$, we have
\begin{equation*}
\begin{aligned}
\sum_{t=1}^T r_t(a) - r_t(a_{fm}) 
& = \sum_{t=1}^T \Big(r_t(a) - \mu_m(a)\Big) + \Big(\mu_m(a) - \mu_m(a_{fm})\Big) + \Big(\mu_m(a_{fm}) - r_t(a_{fm})\Big) \\
& = \sum_{t=1}^T \Big(r_t(a) - \mu_m(a)\Big) - \Delta_3 T + \sum_{t=1}^T \Big(\mu_m(a_{fm})- r_t(a_{fm})\Big) \\
& \leq  \mO\left(\frac{A\log(1/\delta)}{\alpha}t_1\right)  - \Delta_3 T + 2 s_{m} B (t_2 +t_3).
\end{aligned}
\end{equation*}
Analogous to the proof of Theorem~\ref{thm:exp3-ucb} in Appendix~\ref{proof:exp3ucb}, we can complete the proof of Theorem~\ref{thm:EXP3-FMUCB}. 
\newpage
\subsection{Proof for Theorem~\ref{thm:UCBE-FMUCB}}\label{proof:ucbefmucb}
\begin{theorem*}[Last iterate convergence of UCBE-FMUCB with noisy side information]
When the leader uses UCBE with $S_0 = \mO\left( \frac{B}{\varepsilon^3} \log\frac{ABT}{\delta}\right)$ and $\varepsilon = \min\{\Delta_4, \Delta_5, \Delta_6\}$, the follower uses FMUCB, and let $T_{f,w}$ be the total number of rounds that the follower did not play the best manipulation strategy in $T$ rounds, with probability at least $1-3\delta$,
\[
T_{f,w} \leq  \mO\left(\frac{ A B }{ \varepsilon^2} \log \frac{ABT}{\delta}\right),
\]
and for $T\geq  \mO\left( \frac{A S_0}{\Delta_3^2}\right)$, $\mathbb{P}\big[a_T \neq a_{fm} \big]\leq \frac{\delta}{T}$.
\end{theorem*}
\bigskip\bigskip

Since the leader who uses UCBE will do pure exploration for each $a\in\mA$ with $\mO\left(\frac{B}{\varepsilon^3} \log\frac{ ABT}{\delta}\right)$ according to the analysis in Appendix~\ref{proof:ucbeucb}, where $\varepsilon=\min\{\Delta_4,\Delta_5,\Delta_6\}$. So by the union bound, with probability at least $1-3\delta$,
\begin{equation*}
    T_{f,w}\leq \mO\left(t_1 + AB t_2 +AB t_3\right)=\mO\left(\frac{ A B }{\varepsilon^2} \log \frac{ABT}{\delta}\right).
\end{equation*}
Similar to the proof of Theorem~\ref{thm:ucbe-ucb}, we bound the difference between the accumulative reward the leader receives and the accumulative reward under best manipulation strategy,
\begin{align*}
&\left|\sum_{t=1}^{n(a)} r_{l,t}(a, b_t)-\mu_l(a, \mF_{fm}(a))\right| \\
\leq & \left|\sum_{t=1}^{n(a)} \left(r_{l,t}\left(a, \mF_{fm}(a)\right) - \mu_l\left(a, \mF_{fm}(a)\right)\right)\mI\left\{b_t = \mF_{fm}(a)\right\}\right| + \sum_{t=1}^{n(a)} \mI\left\{b_t \neq \mF_{fm}(a)\right\}.
\end{align*}
And based on the proof of Theorem~\ref{thm:EXP3-FMUCB} in Appendix~\ref{proof:ucbeucb} and Lemma~\ref{lemma: bound for wbr and wwr}, we have
\begin{equation*}
    \sum_{t=1}^{n(a)} \mI\left\{b_t \neq \mF_{fm}(a)\right\}\leq T_{wwr}(a)+Bt_2 +Bt_3
    =\mO\left(\frac{B}{\varepsilon^2} \log \frac{ABT}{\delta}\right).
\end{equation*}
Analogous to the proof of Theorem~\ref{thm:ucbe-ucb} in Appendix~\ref{proof:ucbeucb}, we can complete the proof of Theorem~\ref{thm:UCBE-FMUCB}. 
\newpage\section{Follower's pessimistic manipulation}\label{proof:pessimistic tie-breaking rule}
We denote $\mQ(\mF) = \argmax_{a\in\mA}\mu_l(a,\mathcal{F}(a))$. Because there may be multiple actions in $\mA$ with the same largest $\mu_l(a,\mathcal{F}(a))$, so there may be more than one element in $\mQ(\mF)$. From the follower's perspective, pessimistically, the leader will break the tie using $a = \argmin_{a\in \mQ(\mF)} \mu_f(a,\mathcal{F}(a))$. Assume pessimistically tie-breaking is more reasonable considering that the leader uses no-regret learning algorithm, since the leader who uses no-regret learning algorithm may not play one action in $\mQ(\mF)$ stably. So it is important for the follower to choose an appropriate manipulation strategy that will maximize its own reward in the long run taking into account the leader's tendency. When the follower makes the assumption that the leader will break the tie pessimistically, the optimization problem of the follower is 
\begin{equation}\label{eq: opt1}
\begin{aligned}
\max_{\mathcal{F}}  \min_{(a^\prime,b^\prime)\in \mA\times \mB} \quad & \mu_f(a^\prime,b^\prime) \\
\text{s.t.} \quad & a^\prime \in \argmax_{a\in \mA} \mu_l(a,\mF(a)) \\
\quad & b^\prime = \mF(a^\prime)
\end{aligned}
\end{equation}

We present the Algorithm~\ref{alg:FM for opt1} for solving optimization problem~\eqref{eq: opt1}, which is similar to Algorithm~\ref{alg:FMS}. 

\begin{algorithm}[h!]
\caption{Follower's best manipulation towards a pessimistic leader}
\label{alg:FM for opt1}
\renewcommand{\algorithmicrequire}{\textbf{Input:}}
\renewcommand{\algorithmicensure}{\textbf{Output:}}
\begin{algorithmic}[1]
\REQUIRE Candidate set $\mathcal{K}=\mathcal{A} \times \mathcal{B}$, $\mathbf{H}=\{\}$, $\mu_l(\cdot,\cdot)$,  $\mu_f(\cdot,\cdot)$
 \STATE Candidate manipulation pair $(a^\prime,b^\prime)=\argmax_{(a,b)\in\mathcal{K}}\mu_f(a,b)$
 \IF{$\mu_f(a_p,b_p)< \mu_f(a^\prime,b^\prime)$}
 \STATE $\mathcal{F}=\{\mathcal{F}(a^\prime)=b^\prime\}\cup\{\mathcal{F}(a)=\argmin_{b\in\mathcal{B}}\mu_l(a,b): a\neq a^\prime\}$
 \STATE Let $\mathbf{Q} = \argmax \mu_l(a,\mathcal{F}(a))$
 \IF{$(a^\prime,b^\prime) \in \mathbf{Q}$}
 \STATE Denote the current $\mF$ as $\mF_{a_p,b_p}$, $\mathbf{H}\cup \argmin_{(a,b)\in \mathbf{Q}} \mu_f(a,b)$, $(a_p,b_p)=\argmax_{(a,b)\in \mathbf{H}} \mu_f(a,b)$
 \ENDIF
 \STATE Eliminate $(a,b)$ from candidate set: $\mathcal{K}\leftarrow \mathcal{K} \backslash (a^\prime,b^\prime)$ and return to Line 1
 \ELSE
 \STATE $\mathcal{F}_{opt}=\mF_{a_p,b_p}$
 \ENDIF
\ENSURE The response function $\mathcal{F}_{opt}$
\end{algorithmic}
\end{algorithm}

\end{document}